\documentclass{pkumelon}

\usepackage[utf8]{inputenc}
\usepackage{url}
\usepackage{amsfonts}
\usepackage{nicefrac}
\usepackage{xspace}
\usepackage{amsmath}
\usepackage{amsthm}
\usepackage{amssymb}
\usepackage{multicol}
\usepackage{makecell}
\usepackage{enumitem}
\usepackage{wrapfig}
\usepackage{dsfont}
\usepackage{epstopdf}
\usepackage{colortbl}
\usepackage{algorithm}
\usepackage{threeparttable}
\usepackage{mathtools}
\usepackage{float}
\usepackage{mathrsfs}
\usepackage{algpseudocode}
\usepackage{tabularray}
\usepackage{quiver}
\usepackage{diagbox}
\usepackage{needspace}

\renewcommand{\titlefont}{\fontsize{17}{20}\selectfont}
\renewcommand\affiliationformat[2][]{{\affiliationfont {\sffamily $^{#1}$#2}}}


\makeatletter
\renewcommand\authorformat[2][]{%
  \authorfont {\sffamily
    \mbox{\ifstrempty{#1}{#2}{#2$^{#1}$}}%
  }%
}
\makeatother

\newtcolorbox{limitationbox}{
    enhanced,
    breakable,
    colback=boxbg,
    frame hidden,
    boxrule=0pt,
    borderline west={1.2pt}{0pt}{reaslab},
    sharp corners,
    left=6pt,
    right=2pt,
    top=3pt,
    bottom=3pt,
    before skip=5pt,
    after skip=5pt
}
\newenvironment{limbox}{\begin{limitationbox}}{\end{limitationbox}}

\restylefloat{algorithm} 
\theoremstyle{plain}
\newtheorem{assumption}{Assumption}

\newtheorem{theorem}{Theorem}
\newtheorem{remark}{Remark}
\newtheorem{lemma}{Lemma}

\newtheorem{corollary}{Corollary}

\newcommand{\Nest}{\mathcal{N}}

\newcommand{\GNMR}{\text{GNMR}}

\newcommand{\dGNMR}{\Delta\text{-GNMR}}

\usepackage{tabularray}
\usepackage{xcolor,pifont}
\usepackage{MnSymbol,wasysym}

\newcommand{\xmark}{\textcolor{red}{\large\ding{55}}}
\newcommand{\xmarkk}{\textcolor{red}{\large\ding{55} }}

\title{GNMR: Runtime Stability Control for Low-Precision Large Language Model Training}

\author[1,*]{Boao Kong}
\author[1,*]{Weichen Jia}
\author[1,*]{Engao Zhang}
\author[1,*]{Guohong Li}
\author[2]{Yonghan Dong}
\author[2]{Yao Wang}
\author[2]{Yaoyuan Wang}
\author[2]{Yunke Peng}
\author[1,\P]{Kun Yuan}

\affiliation[1]{Peking University}
\affiliation[2]{Huawei Technologies Ltd.}
\contribution[*]{Equal Contribution}
\contribution[\P]{Corresponding author}
\checkdata[Emails]{\email{kongboao@stu.pku.edu.cn} (Boao Kong), \email{kunyuan@pku.edu.cn} (Kun Yuan)}

\abstract{
Training stability is a key bottleneck in low-precision language model training: efficient low-cost paths can still produce short-lived numerical risks at a small set of operators. We formulate this as runtime stability control and present \textbf{\underline{G}}radient \textbf{\underline{N}}orm-to-\textbf{\underline{M}}ean \textbf{\underline{R}}atio (\textbf{GNMR}), a lightweight controller that compares each recoverable unit's current gradient norm with its historical mean. Together with $\Delta$-GNMR for abrupt short-window increases, GNMR maps local risk signals to bounded recovery actions under a hard $\mathrm{maxO}$ budget and a short lock interval, without changing the numerical format, kernel, or backend recipe. Across activation-quantization stress, DeepSeek-style recipe-level training, and LLaMA-2 13B fine-tuning, GNMR preserves high-fidelity quality with sparse, budgeted recovery. These results support GNMR as a backend-agnostic controller to improve low-precision training stability while preserving low-cost execution.
}

\begin{document}

\maketitle

\section{Introduction}
Training stability is a central constraint in training and adaptation of large language model (LLM). To reduce computation, memory, and communication costs, modern training stacks use low-cost execution paths. These paths may appear as mixed-precision training \citep{micikevicius2017mixed}, FP8 training recipes \citep{peng2023fp8}, compression of activation or optimization state \citep{xi2024coat}, or quantized adaptation \citep{dettmers2023qlora}. Previous work has shown that such paths can preserve acceptable model quality when combined with suitable formats, scaling rules, outlier handling, and high-fidelity safeguards. The question we study is not whether low-cost execution is useful on average. Instead, we ask whether training can detect and recover from local stability risks when low-precision perturbations become harmful at runtime.

From a stability angle, low-precision perturbations do not affect all layers, operators, and training steps equally. Most low-cost execution may be safe for most steps, while a few local units can become risky in short windows. LLM.int8~\citep{dettmers20218} identifies outlier features as a key difficulty in quantization of LLMs, and SmoothQuant~\citep{xiao2023smoothquant} treats activation outliers as a central obstacle to accurate low-bit execution. Reduced-precision training can be sensitive to stability and hyperparameter choices~\citep{lee2024fp8}. More broadly, loss and gradient spikes are known failure modes in large-scale pre-training \citep{takase2023spike} and have motivated optimizer-level stabilization methods \citep{huang2025spam}.
\begin{figure*}[t!]
\centering
\includegraphics[width=0.96\textwidth]{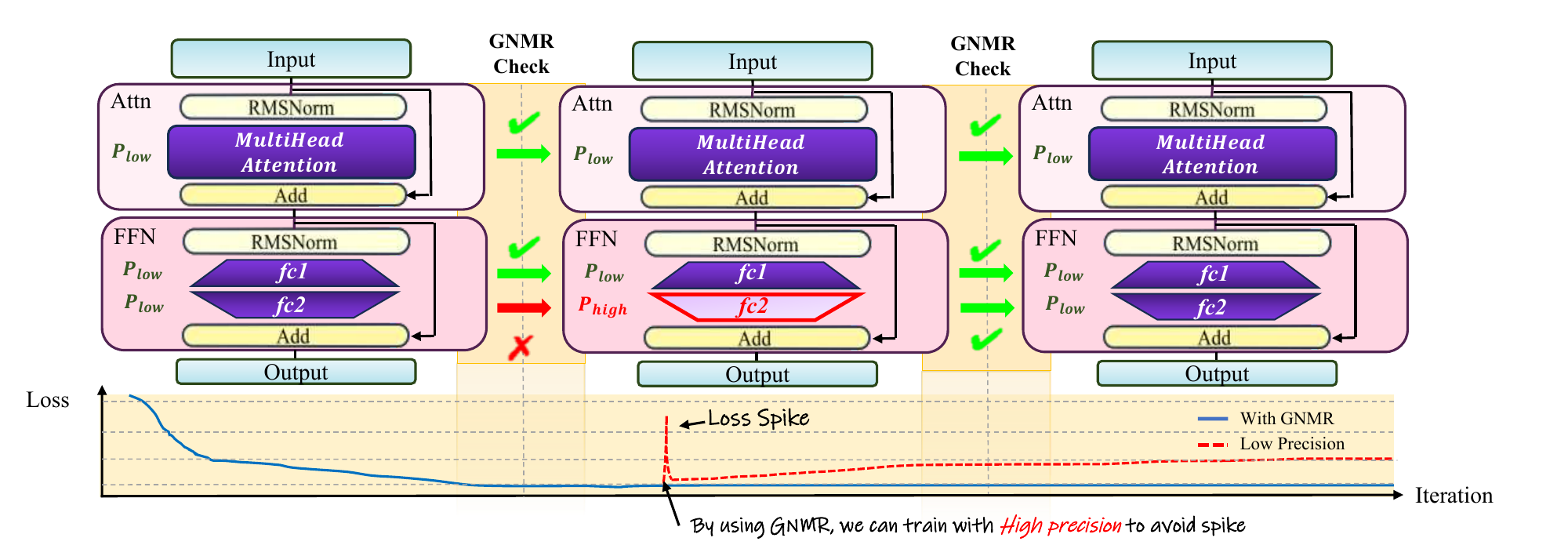}
\caption{Overview of GNMR as a runtime stability controller. GNMR monitors local risk signals and routes selected recoverable units from the low-cost path to the higher-fidelity recovery path under a fixed active budget.}
\label{fig:pipeline}
\end{figure*}

These observations make stability under low precision a runtime control problem, not only a static configuration problem. A layer may be stable for many updates, become risky for a few steps when gradients, activation statistics, scaling factors, or optimizer states shift, and then return to a stable regime. In the same step, most of the other layers may remain safe. A useful controller should therefore identify which local units are becoming unstable, decide whether recovery is needed now, and apply recovery under a fixed overhead budget.

Existing work addresses low-precision numerical issues in several directions. Low-bit training recipes extend practical execution to FP4-style regimes \citep{wang2025optimizing} and NVFP4-style regimes \citep{abecassis2025pretraining}. Training stability methods include gradient clipping \citep{pascanu2013difficulty}, gradient-noise-scale analysis \citep{mccandlish2018empirical}, and GSNR-style signals \citep{jiang2023accelerating}. Dynamic precision methods study adaptive bit-width choices over time or structure \citep{yu2022ldplearnabledynamicprecision}. These methods are valuable, but they mainly focus on the format, recipe, optimizer, or offline calibration level. They do not directly answer the controller-level question studied here: given an existing low-cost path and a more accurate recovery path, how should training decide online which local units to recover under a fixed budget?

\subsection{Challenges in Runtime Stability Control under Low Precision}

The stability-control view above leads to four requirements that existing low-precision approaches do not jointly satisfy.

\noindent \textbf{(L1) Sparse and non-stationary runtime risk.}
Low-precision instability is rarely uniform across all layers and steps. A unit may be safe for most of training process but risky during a short window. Static or semi-static strategies may miss such transient risks, or protect too much of the model with conservative safeguards.

\noindent \textbf{(L2) Local and lightweight risk signals.}
Existing stability indicators include loss spikes \citep{takase2023spike}, global gradient norms \citep{pascanu2013difficulty}, gradient-noise statistics \citep{mccandlish2018empirical}, GSNR-style signals \citep{jiang2023accelerating}, and outlier statistics \citep{xiao2023smoothquant}. However, a runtime controller needs more than a general warning signal. It needs a signal that is cheap to compute, available in every single step, local enough to identify the risky unit, and comparable across modules.

\noindent \textbf{(L3) Budgeted and stable intervention.}
Recovery actions are not free. Sending more units through a higher-fidelity path can increase peak memory, casting overhead, and routing complexity. Frequent switching can also cause thrashing. A controller must therefore decide which units are risky, how many can be recovered at each step, and how long recovery should persist.

\noindent \textbf{(L4) Separation between control quality and backend implementation.}
Low-precision studies often evaluate complete systems where formats, kernels, scaling rules, optimizers, and fallback policies change together. Such evaluations are useful, but they can hide the controller question. To evaluate runtime stability control, we need to separate trigger quality, budget behavior, recipe-level integration, and large-model boundedness from claims about new kernels or universal hardware speedups.

\subsection{Contributions}

To address these challenges, we present \textbf{{GNMR}}, short for \textbf{\underline{G}}radient \textbf{\underline{N}}orm-to-\textbf{\underline{M}}ean \textbf{\underline{R}}atio, a runtime stability controller for low-precision language model training, as illustrated in Figure~\ref{fig:pipeline}. GNMR monitors operator-wise gradient deviations and applies temporary high-fidelity recovery only to risky low-precision-eligible units under a fixed budget. Its core contribution is a stability-aware risk monitor and a budgeted closed-loop control policy operating over existing low-cost and recovery paths. Our contributions can be summarized as:

\noindent \textbf{(C1) Runtime stability-control formulation under low precision.}
We formulate local instability in low-precision language model training as a constrained online control problem. Given a low-cost execution path and a higher-fidelity recovery path, the controller observes runtime statistics of eligible units and selects a sparse subset for temporary recovery under a memory or compute budget. This formulation unifies risk identification, recovery decisions, and budget constraints while decoupling the control policy from backend implementation. \textbf{This addresses (L1) and (L4).}

\noindent \textbf{(C2) Dual-timescale GNMR risk monitor.}
We design GNMR as a lightweight operator-wise statistic that compares a unit's current gradient norm with its historical mean, capturing long-horizon deviations from typical training behavior. We further introduce $\Delta$-GNMR, which normalizes GNMR by a short recent window to detect abrupt local changes. Together, they provide a real-time, dimensionless, and local risk signal for runtime control. \textbf{This addresses (L2).}

\noindent \textbf{(C3) Budgeted recovery policy with analysis.}
We design a policy that recovers only top-risk units, uses $\mathrm{maxO}$ to control instantaneous overhead, and applies a short $T_{\mathrm{lock}}$ interval to reduce thrashing. We also provide a one-step expected-descent analysis showing why selective recovery can reduce low-precision perturbation on critical updates. \textbf{This addresses (L3).}

\noindent \textbf{(C4) Controller-level evidence across low-precision settings.}
We evaluate GNMR in a controlled activation-quantization stress bench, a TE-backed low-precision training setting \citep{perez2023traininginferencelargelanguage}, and a large-model post-training task. The experiments examine trigger quality, budget behavior, quality preservation, and engineering overhead, supporting GNMR as a runtime stability controller that separates trigger quality, budget behavior, recipe-level integration, and downstream preservation from backend implementation choices. \textbf{This addresses (L4).}

A comprehensive review of \textbf{Related Work} is provided in Appendix~\ref{sec: appendix_related_works}.

\section{Preliminary: Operator-wise Runtime Monitoring}
\label{sec:preliminaries}

We use an operator-level view of transformer training to define the local runtime statistics used by our controller. 
This view separates the compositional structure of the network from within-layer parallelism, but we keep only the notation needed in the main text. 
Detailed instantiations for attention, feed-forward networks, normalization, residual paths, aggregation, and backpropagation are deferred to Appendix~\ref{appendix: operator-level modeling}.

\noindent \textbf{Training objective.}
Let $F(\cdot;\Theta)$ be the model mapping and $r(\cdot;\theta^{(r)})$ the readout to logits. 
For data $(x,y)\!\sim\!\mathcal D$, we write the training objective as
\vspace{-1mm}
\begin{align*}
\label{eq: modeling_1}
L(\Theta,\theta^{(r)})
=
\mathbb{E}_{(x,y)\sim\mathcal D}
\Big[
\ell\big(r(F(x;\Theta);\theta^{(r)}),y\big)
\Big],
\end{align*}

\vspace{-1mm}
\noindent where $\Theta$ contains the parameters of the model's operator blocks.

\noindent \textbf{Operator-level decomposition.}
We write the network as a composition of $U$ nested operators:
\begin{equation}
\label{eq: modeling_2}
\begin{aligned}
    y^{[0]}=x,\quad y^{[u]}=\Nest^{[u]}\!\left(y^{[u-1]};\Xi^{[u]}\right),
\quad u=1,\ldots,U .
\end{aligned}
\end{equation}
Each block $\Nest^{[u]}$ may contain multiple sibling operators whose outputs are combined by an aggregation operation. 
This covers multi-head attention, feed-forward modules, normalization, and residual paths. 
We leave the explicit sibling-operator and aggregation formulas to Appendix~\ref{appendix: operator-level modeling}.

\noindent \textbf{Monitored runtime units.}
The decomposition above induces a set of monitored units $\mathcal{B}$. 
A unit $b\in\mathcal{B}$ may correspond to an operator, branch, block, or layer, depending on the implementation granularity. 
We denote its parameters by $\theta_b$. 
At training step $t$, let $\widehat L_t$ be the stochastic mini-batch loss. 
The local gradient and its norm are
\begin{align}
\label{eq:local_gradient}
g_{b,t}
=
\nabla_{\theta_b}\widehat L_t,
\qquad
n_{b,t}
=
\|g_{b,t}\|_2 .
\end{align}

\vspace{-1mm}
\noindent The scalar $n_{b,t}$ is the basic runtime statistic used in Section~\ref{sec:gnmr_controller}. 
Because raw gradient norms are not directly comparable across operators with different typical scales, the next section introduces a history-normalized statistic for local stability monitoring.

\section{GNMR Runtime Stability Controller}
\label{sec:gnmr_controller}

Section~\ref{sec:preliminaries} defines the operator-wise gradient norm
$n_{b,t}$ as a local statistic available during training.
Although $n_{b,t}$ is local and cheap to obtain, its raw value cannot be used as a risk score directly, since different operators have different typical scales.
A useful risk criterion should therefore measure whether a unit's current update is unusual relative to its own history.

We instantiate this criterion with two history-normalized risk scores: GNMR for long-horizon deviations and $\Delta$-GNMR for short-window changes.
These scores are then used to drive the budgeted recovery policy in Section~\ref{sec:budgeted_recovery}.

\subsection{Long-horizon Risk Score: GNMR}
\label{sec:gnmr}

\noindent GNMR, short for Gradient Norm-to-Mean Ratio, measures how unusual the current gradient norm of a monitored unit is relative to its own historical scale.
For an operator parameter $\theta^{[u,k]}$, this corresponds to comparing its current gradient norm with its running historical mean:

\begin{align}
\GNMR_t(\theta^{[u,k]}) := \dfrac{\left\Vert g_t^{[u,k]} \right\Vert}{\sum_{\tau=1}^{t-1}\left\Vert g_{\tau}^{[u,k]} \right\Vert/(t-1)},
\end{align}

\noindent where $g_t^{[u,k]}$ denotes the evaluated gradient at the $t$-th step with respect to $\theta^{[u,k]}$. 
We set $\GNMR_1(\theta^{[u,k]})=1$.
Instead of comparing raw gradient norms across operators, GNMR compares each unit with its own past behavior, making the score dimensionless and more suitable for ranking local runtime risks across heterogeneous modules.

Compared with stability indicators such as gradient spike score (GSS) \cite{huang2025spam}, gradient signal-to-noise ratio (GSNR) \cite{jiang2023accelerating}, and Jacobian spectral norm \cite{takase2023spike}, GNMR is designed for online operator-wise monitoring.
GSS relies on full-process gradient statistics, GSNR requires large-batch gradients, and Jacobian-based quantities introduce substantial overhead.
In contrast, GNMR only uses the current gradient norm and a running historical average, as summarized in Algorithm~\ref{alg:get_gnmr}.

\begin{algorithm}[t]
  \caption{Online computation of GNMR and $\Delta$-GNMR for one monitored unit}
  \label{alg:get_gnmr}
  \begin{algorithmic}
  \Require Gradient $g_{b,t}$, historical mean $A_{b,t-1}$, recent GNMR queue $Q_b$, window length $\delta$, step $t$, and small constant $\epsilon$.
  \State $n_{b,t} \gets \|g_{b,t}\|_2$.
  \If{$t=1$ \textbf{or} $A_{b,t-1}=0$}
      \State $\GNMR_{b,t} \gets 1$.
  \Else
      \State $\GNMR_{b,t} \gets n_{b,t}/(A_{b,t-1}+\epsilon)$.
  \EndIf
  \If{$|Q_b|=\delta$}
      \State $\Delta\text{-}\GNMR_{b,t} \gets \GNMR_{b,t}/(\mathrm{mean}(Q_b)+\epsilon)$.
  \Else
      \State $\Delta\text{-}\GNMR_{b,t} \gets 1$.
  \EndIf
  \State $A_{b,t} \gets A_{b,t-1} + (n_{b,t}-A_{b,t-1})/t$.
  \State Push $\GNMR_{b,t}$ into $Q_b$ and keep only the most recent $\delta$ values.
  \State \Return $\GNMR_{b,t}$, $\Delta\text{-}\GNMR_{b,t}$, $A_{b,t}$, and $Q_b$.
  \end{algorithmic}
\end{algorithm}

\begin{algorithm}[t]
  \caption{Budgeted runtime recovery from GNMR risk scores}
  \label{alg:budgeted_recovery}
  \begin{algorithmic}
  \Require Recoverable units $\mathcal B_{\mathrm{rec}}$, risk thresholds $\alpha_t,\beta_t$, active budget $\mathrm{maxO}$, lock length $T_{\mathrm{lock}}$, and lock counters $\mathrm{Lock}_b$ for $b\in\mathcal B_{\mathrm{rec}}$.
  \For{each training step $t$}
      \For{each recoverable unit $b\in\mathcal B_{\mathrm{rec}}$}
          \State Compute $\GNMR_{b,t}$ and $\Delta\text{-}\GNMR_{b,t}$ by Algorithm~\ref{alg:get_gnmr}.
          \State $R_{b,t}\gets \max\{\GNMR_{b,t}/\alpha_t,\Delta\text{-}\GNMR_{b,t}/\beta_t\}$.
      \EndFor
      \State $\mathcal C_t\gets \{b:R_{b,t}>1\}\cup\{b:\mathrm{Lock}_b>0\}$.
      \State $\mathcal A_t\gets \operatorname{Top}_{\mathrm{maxO}}(\mathcal C_t; R_{b,t})$.
      \For{each recoverable unit $b\in\mathcal B_{\mathrm{rec}}$}
          \If{$b\in\mathcal A_t$}
              \State Use the recovery path for $b$.
              \If{$R_{b,t}>1$}
                  \State $\mathrm{Lock}_b\gets T_{\mathrm{lock}}$.
              \Else
                  \State $\mathrm{Lock}_b\gets \max(\mathrm{Lock}_b-1,0)$.
              \EndIf
          \Else
              \State Use the low-cost path for $b$ and set $\mathrm{Lock}_b\gets 0$.
          \EndIf
      \EndFor
  \EndFor
  \end{algorithmic}
\end{algorithm}

\subsection{Short-window Risk Score: $\Delta$-GNMR}
\label{sec:delta_gnmr}

\noindent GNMR uses long-horizon history, which is useful for measuring persistent deviation but may smooth out abrupt local changes.
To capture short-window instability, we introduce $\Delta$-GNMR, which compares the current GNMR value with its recent-window average:

\begin{align}
\Delta\text{-GNMR}_t(\theta^{[u,k]}) := \dfrac{\text{GNMR}_t(\theta^{[u,k]})}{\sum_{\tau=1}^{\delta}\text{GNMR}_{t-\tau}(\theta^{[u,k]})/\delta},
\end{align}

\noindent where $\delta$ is the short-window length.
Thus, GNMR asks whether a unit deviates from its long-term behavior, while $\Delta$-GNMR asks whether the current risk rises sharply relative to recent steps.

Similar to GNMR, $\Delta$-GNMR metric can be computed online with a small queue of recent GNMR values and does not require extra backward passes or full-process statistics.
Algorithm~\ref{alg:get_gnmr} gives the concrete computation for a generic monitored unit $b$; for an operator parameter $\theta^{[u,k]}$, one can take $b=(u,k)$.
The scores themselves do not define a numerical format or kernel; they provide local risk estimates that are later converted into recovery actions by the controller.

\subsection{From Risk Scores to Budgeted Recovery}
\label{sec:budgeted_recovery}
GNMR and \(\Delta\)-GNMR provide local risk estimates, but the controller must turn these scores into bounded recovery actions. We distinguish monitored units from recoverable units. A monitored unit has runtime statistics, while a recoverable unit is one for which the backend training stack provides both a low-cost path and a higher-fidelity recovery path. Units fixed by the backend recipe remain outside the action space.

At step \(t\), for each recoverable unit \(b\), we combine the two risk scores as
\[
R_{b,t}
=
\max\left\{
\frac{\GNMR_{b,t}}{\alpha_t},
\frac{\Delta\text{-}\GNMR_{b,t}}{\beta_t}
\right\}.
\]
Here, \(\alpha_t\) and \(\beta_t\) control the risk sensitivity of long-horizon deviations and short-window changes. Candidate units are those with \(R_{b,t}>1\), together with units still prioritized by the lock mechanism. The controller then selects the final active recovery set with the top $\mathrm{maxO}$ operators as
\[
\mathcal A_t
=
\operatorname{Top}_{\mathrm{maxO}}(\mathcal C_t; R_{b,t}),
\]
where \(\mathcal C_t\) is the candidate pool. Only units in \(\mathcal A_t\) use the recovery path; all remaining recoverable units use the low-cost path. Thus, \(\mathrm{maxO}\) caps the number of units active in recovery at each step, matching the budgeted set in Theorem~\ref{thm:op-descent-gain-main}.

The lock mechanism stabilizes active-set membership but does not bypass this active budget. When a unit is selected, its lock counter is reset to \(T_{\mathrm{lock}}\); locked units remain in the candidate pool for a short interval, but the final active set is still chosen under the same top-\(\mathrm{maxO}\) cap. This prevents frequent switching while preserving a fixed instantaneous recovery budget. 
Algorithm~\ref{alg:budgeted_recovery} summarizes the resulting closed-loop recovery procedure.

\subsection{Controller Interface, Budget, and Calibration}
\label{sec:controller_budget_calibration}

\noindent\textbf{Controller interface and backend constraints.}
GNMR operates on backend-provided low-cost and higher-fidelity recovery paths for a subset of recoverable units. 
It assumes that the backend provides a low-cost path and a higher-fidelity recovery path for a subset of recoverable units, which can be operators, blocks, or layers depending on the implementation. 
Units fixed by the backend recipe may still be monitored, but they are not adjusted by the controller. 
When operator-level recovery may create inconsistent execution within a block, we treat the whole transformer block as the recoverable unit, so all recoverable low-cost operations inside the block share the same recovery decision. 
This interface lets the same risk monitor work across controlled activation-quantization stress tests, TE-backed low-precision training, and large-model post-training stress tests, while keeping backend-specific constraints explicit.

\noindent\textbf{Active budget and calibration.}
Recovery incurs extra memory compared with the low-cost path, especially when multiple units become risky in the same step. 
The budget $\mathrm{maxO}$ is a hard cap on the active recovery set: at each
step, the controller executes at most $\mathrm{maxO}$ recoverable units on the
recovery path. Let $c_b$ denote the additional saved-activation cost of
recovering unit $b$. Then
\vspace{-1mm}
\begin{equation}
\label{eq:generic-memory-bound}
\begin{aligned}
\Delta M_t
&=
\sum_{b\in\mathcal A_t} c_b
\le
\mathrm{maxO}\cdot c_{\max},\quad
c_{\max}
=
\max_{b\in\mathcal B_{\mathrm{rec}}} c_b .
\end{aligned}
\end{equation}
Appendix~\ref{app:memory} instantiates $c_b$ using saved-activation bitwidths and activation footprints. The lock interval $T_{\mathrm{lock}}$ reduces switching by keeping recent risky units eligible for short-term retention, but the final active set $\mathcal A_t$ is always selected under the same $\mathrm{maxO}$ cap.

The lock interval \(T_{\mathrm{lock}}\) stabilizes active-set membership over time: recently selected units remain in the candidate pool for a short window and can be retained if they remain among the top-risk units, but the final active set is always selected under the same \(\mathrm{maxO}\) cap. 
Thus, the lock mechanism reduces unnecessary switching without bypassing the recovery budget.

The thresholds \(\alpha_t\) and \(\beta_t\) control risk sensitivity. 
The former governs long-horizon GNMR deviations, while the latter governs short-window \(\Delta\)-GNMR changes. 
In practice, a higher warm-up threshold reduces noisy early triggers when the history is short, while scheduled relaxation or warm-up calibration maintains sensitivity after running statistics stabilize. 
Section~\ref{section: theoretical} provides a bound-level motivation connecting active recovery, perturbation reduction, and threshold calibration.

\section{Theoretical Motivation}
\label{section: theoretical}

We give a bound-level motivation for budgeted recovery. Low-cost
execution may perturb forward computation, saved activations, or
backward signals, and such perturbations can propagate through the
computation graph. We therefore do not model low-precision error as
independent zero-mean gradient noise. Instead, for each recoverable unit
\(b\) and path \(q\in\{\mathrm{low},\mathrm{rec}\}\), let
\(\mathcal E_q^b(\rho)\) denote an operator-wise perturbation penalty
that upper-bounds its contribution to the one-step smoothness bound at
risk level \(\rho\). The standing assumptions, including \(L\)-smoothness,
standard SGD noise, and the perturbation envelope, are given in
Appendix~\ref{appendix: assumptions_and_proofs}.

Define the recovery gap
\[
\Delta_s^b
=
\Delta\mathcal E^b(\rho_s^b)
=
\mathcal E_{\mathrm{low}}^b(\rho_s^b)
-
\mathcal E_{\mathrm{rec}}^b(\rho_s^b).
\]
We assume \(\Delta_s^b\ge 0\) on the risky regimes targeted by the
controller.

\begin{table*}[t!]
\caption{Validation perplexity ($\downarrow$) in the controlled activation-quantization stress bench for LLaMA-2 pre-training. The low-cost path quantizes saved activations of recoverable projection operators; GNMR+\(\Delta\)-GNMR selectively routes risky units to the recovery path. \xmarkk means the model fails to converge.}
\vspace{-1.5mm}
\label{table:activation quantilization}
\centering
\setlength{\tabcolsep}{10pt}
\renewcommand{\arraystretch}{1.1}
\label{table:activation quantilization}
\begin{threeparttable}
{\small
\begin{tabular}{ccccccc} 
\toprule
\multirow{2}{*}{\textbf{precision}} & \multirow{2}{*}{$\alpha_t$} & \multirow{2}{*}{$\beta_t$} & 60M & 130M & 350M & 1.3B \\
\cline{4-7}
&&& 1.1B tokens & 2.2B tokens& 6.4B tokens & 13.1B tokens \\ \midrule
\textbf{4-bit} & N.A. & N.A. & 67.22 & 169.21& 26.56 & \xmark \\\midrule
\multirow{3}{*}{\textbf{4-bit/8-bit}} & \cellcolor{cyan!30}1.5 & \cellcolor{cyan!30}1.3 & \cellcolor{cyan!30}\textbf{30.59} & \cellcolor{cyan!30}\textbf{24.66} & \cellcolor{cyan!30}18.84 & \cellcolor{cyan!30}15.71 \\
 & \cellcolor{orange!30}2.0&\cellcolor{orange!30}1.5 & \cellcolor{orange!30}30.66 & \cellcolor{orange!30}24.86 & \cellcolor{orange!30}19.07 & \cellcolor{orange!30}N.A. \\
 & \cellcolor{green!30}3.0&\cellcolor{green!30}2.0 & \cellcolor{green!30}30.68 & \cellcolor{green!30}25.33 & \cellcolor{green!30}18.77 & \cellcolor{green!30}N.A.  \\ \midrule
\textbf{8-bit} & N.A. & N.A. & 30.88 & 25.13 & \textbf{18.59} & 15.68 \\\midrule
\textbf{16-bit} & N.A. & N.A. & 30.79 & 25.06 & 18.80 & \textbf{15.56} \\
\bottomrule
\end{tabular}}
\end{threeparttable}
\end{table*}

\Needspace{9\baselineskip}
\begin{limbox}
\begin{theorem}[Bound-level gain of budgeted recovery]
\label{thm:op-descent-gain-main}
Let \(\mathcal A_s\subseteq\mathcal B_{\mathrm{rec}}\) be the active
recovery set at step \(s\), with \(|\mathcal A_s|\le\mathrm{maxO}\).
Let \(\bar{\mathcal U}_s^{\mathrm{Low}}\) be the one-step upper bound
under the always-low-cost path, and
\(\bar{\mathcal U}_s^{\mathrm{Rec}}(\mathcal A_s)\) the bound when
units in \(\mathcal A_s\) use recovery. If we denote $\Delta^{\mathcal U}_s
:=
\bar{\mathcal U}_s^{\mathrm{Low}}
-
\bar{\mathcal U}_s^{\mathrm{Rec}}(\mathcal A_s),$ then under the standing assumptions,
\[
\Delta^{\mathcal U}_s
=
\sum_{b\in\mathcal B_{\mathrm{rec}}}
\Delta\mathcal E^b(\rho_s^b)
\mathbf 1\{b\in\mathcal A_s\}.
\]
Moreover, for the transition from step \(t\) to \(t+1\), suppose the
newly selected set satisfies \(\mathcal S_t\subseteq\mathcal A_{t+1}\),
\begin{align*}
    \Pr(\rho_{t+1}^b\ge\tau_b\mid b\in\mathcal S_t)\ge\pi_b,\quad\Delta\mathcal E^b(\rho)\ge\Delta\mathcal E^b(\tau_b)\mathbf 1\{\rho\ge\tau_b\}.
\end{align*}
Then it holds that
\begin{equation}
\begin{aligned}
\mathbb E[\Delta^{\mathcal U}_{t+1}]
\ge
\sum_{b\in\mathcal B_{\mathrm{rec}}}
\Pr(b\in\mathcal S_t)\pi_b\Delta\mathcal E^b(\tau_b).
\end{aligned}
\end{equation}
(See the proof in Appendix~\ref{sec:the_part2}.)
\end{theorem}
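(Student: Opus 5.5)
The plan is to treat the theorem as two separate claims. The first is an exact identity between two smoothness upper bounds. The second is an expectation lower bound obtained from that identity by nonnegativity and conditioning.

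For the first claim, I write out the one-step descent bound from the standing assumptions in Appendix~\ref{appendix: assumptions_and_proofs}. Applying $L$-smoothness to the SGD update gives
\[
\mathbb E[L(\Theta_{s+1})]\le L(\Theta_s)-\eta\|\nabla L(\Theta_s)\|^2+\tfrac{L\eta^2}{2}(\cdot)+\sum_{b\in\mathcal B_{\mathrm{rec}}}\mathcal E^b_{q_b}(\rho_s^b)+C_s .
\]
Here $q_b\in\{\mathrm{low},\mathrm{rec}\}$ is the path used by unit $b$, and $C_s$ collects every term that does not depend on the path assignment: the gradient-norm term, the SGD variance, and the contributions of non-recoverable units. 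The perturbation-envelope assumption is exactly what lets the path-dependent part split additively over units. Call the right-hand side $\bar{\mathcal U}_s(q)$.

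Under the always-low-cost policy, $q_b=\mathrm{low}$ for every $b$. Under recovery with active set $\mathcal A_s$, $q_b=\mathrm{rec}$ exactly when $b\in\mathcal A_s$. Subtracting the two bounds cancels $C_s$ and all the low-path terms for $b\notin\mathcal A_s$. What remains is $\sum_b \Delta\mathcal E^b(\rho_s^b)\mathbf 1\{b\in\mathcal A_s\}$, which is the stated identity. The budget $|\mathcal A_s|\le\mathrm{maxO}$ plays no role in the identity; it only says the sum has at most $\mathrm{maxO}$ nonzero terms.

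For the second claim, I apply the identity at step $t+1$. Each term satisfies $\Delta\mathcal E^b(\rho)\ge0$ on the targeted regimes, which the hypothesis $\Delta\mathcal E^b(\rho)\ge\Delta\mathcal E^b(\tau_b)\mathbf 1\{\rho\ge\tau_b\}$ also gives wherever the indicator is active. Since $\mathcal S_t\subseteq\mathcal A_{t+1}$, I can drop the units in $\mathcal A_{t+1}\setminus\mathcal S_t$:
\[
\Delta^{\mathcal U}_{t+1}\ge\sum_b \Delta\mathcal E^b(\rho_{t+1}^b)\mathbf 1\{b\in\mathcal S_t\}\ge\sum_b \Delta\mathcal E^b(\tau_b)\mathbf 1\{\rho_{t+1}^b\ge\tau_b\}\mathbf 1\{b\in\mathcal S_t\}.
\]
Taking expectations, using linearity, and writing $\mathbb E[\mathbf 1\{\rho_{t+1}^b\ge\tau_b\}\mathbf 1\{b\in\mathcal S_t\}]=\Pr(b\in\mathcal S_t)\Pr(\rho_{t+1}^b\ge\tau_b\mid b\in\mathcal S_t)$ together with the hypothesis $\Pr(\rho_{t+1}^b\ge\tau_b\mid b\in\mathcal S_t)\ge\pi_b$ yields the claimed sum. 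The constants $\Delta\mathcal E^b(\tau_b)$ are deterministic, so they factor out of the expectation; this requires $\tau_b$ to be fixed, not random. If $\Pr(b\in\mathcal S_t)=0$, that term is zero and the conditioning is vacuous.

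I expect the main obstacle to be the first step: justifying that the path-dependent part of the bound is exactly additive across units. That is, switching unit $b$ to recovery must not change the envelope terms of the other units, even though perturbations propagate through the computation graph. I would resolve this by relying on how the envelope assumption is formulated, with $\mathcal E^b_q$ defined as unit $b$'s separable contribution to the bound and cross-effects absorbed into each unit's own penalty. I would state this explicitly as the point where the assumption is used. A secondary point is nonnegativity: $\Delta\mathcal E^b(\rho)\ge0$ must hold for units in $\mathcal A_{t+1}\setminus\mathcal S_t$ so they can be dropped. I would justify it from the standing assumption that $\Delta_s^b\ge0$ on the risky regimes the controller targets.
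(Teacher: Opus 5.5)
Your proposal is correct and follows essentially the same route as the paper: the identity comes from subtracting the two envelope-based surrogate bounds, where the shared term $\mathcal U_{0,s}$ and the low-path terms for $b\notin\mathcal A_s$ cancel; the expectation bound then uses $\mathbf 1\{b\in\mathcal A_{t+1}\}\ge\mathbf 1\{b\in\mathcal S_t\}$, the lower envelope, and the product rule with the persistence hypothesis (the paper does this per unit and then sums). One small correction: the nonnegativity you need, both to drop $\mathcal A_{t+1}\setminus\mathcal S_t$ and to multiply the probability bound by $\Delta\mathcal E^b(\tau_b)$, follows directly from the lower envelope together with $\Delta\mathcal E^b(\tau_b)\ge0$ in the selected-risk persistence assumption, which is how the paper obtains it; note that for $\rho<\tau_b$ the envelope itself reads $\Delta\mathcal E^b(\rho)\ge0$, so it is not only ``where the indicator is active'' that the hypothesis helps.
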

\end{limbox}

\Needspace{9\baselineskip}
\begin{limbox}
\begin{corollary}[Oracle target under an active budget]
\label{cor:oracle_budget}
If the latent gaps \(\Delta_s^b\) were observable, the oracle recovery set under budget
\(|\mathcal A_s|\le\mathrm{maxO}\) would solve
\[
\mathcal A_s^\star
\in
\arg\max_{\mathcal A\subseteq\mathcal B_{\mathrm{rec}},\,|\mathcal A|\le\mathrm{maxO}}
\sum_{b\in\mathcal A}\Delta_s^b.
\]
Thus, it selects the largest positive gaps up to the total budget.
\end{corollary}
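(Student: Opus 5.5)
The plan is to read Corollary~\ref{cor:oracle_budget} directly off the exact decomposition in the first part of Theorem~\ref{thm:op-descent-gain-main}. For any admissible set \(\mathcal A\subseteq\mathcal B_{\mathrm{rec}}\) with \(|\mathcal A|\le\mathrm{maxO}\), the theorem gives
\[
\bar{\mathcal U}_s^{\mathrm{Low}}-\bar{\mathcal U}_s^{\mathrm{Rec}}(\mathcal A)=\sum_{b\in\mathcal A}\Delta_s^b .
\]
The term \(\bar{\mathcal U}_s^{\mathrm{Low}}\) does not depend on \(\mathcal A\). Hence minimizing the one-step upper bound \(\bar{\mathcal U}_s^{\mathrm{Rec}}(\mathcal A)\) over budget-feasible sets is equivalent to maximizing the additive set function \(f(\mathcal A)=\sum_{b\in\mathcal A}\Delta_s^b\) subject to \(|\mathcal A|\le\mathrm{maxO}\). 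This equivalence is the first claim of the corollary: the oracle, which sees the latent gaps, picks \(\mathcal A_s^\star\) in the stated \(\arg\max\).

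\emph{Existence of a maximizer.} The feasible family is finite and nonempty, since it contains \(\emptyset\). So the \(\arg\max\) is attained.

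\emph{Structure of the maximizer.} I would characterize the maximizer by a short exchange argument.
\begin{enumerate}
\item Removing any \(b\) with \(\Delta_s^b<0\) from a candidate set strictly increases \(f\) and keeps feasibility. So the optimum contains no negative gaps; zero gaps are immaterial.
\item Suppose \(\mathcal A\) is optimal, \(b\in\mathcal A\), \(b'\notin\mathcal A\), and \(\Delta_s^{b'}>\Delta_s^b\). Swapping \(b\) for \(b'\) preserves cardinality and strictly increases \(f\), which is a contradiction. The same argument with \(b=\emptyset\) shows that if \(|\mathcal A|<\mathrm{maxO}\), then no positive gap is left out.
\end{enumerate}
Together these show that \(\mathcal A_s^\star\) consists of the \(\min\{\mathrm{maxO},\#\{b:\Delta_s^b>0\}\}\) largest positive gaps, with ties broken arbitrarily. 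Conversely, any such top set attains the maximum value. This matches the phrase ``largest positive gaps up to the total budget.''

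\emph{Expected obstacle.} The only delicate point is how ties and zero gaps are handled. The statement says \(\mathcal A_s^\star\in\arg\max\), not that the maximizer is unique. I would therefore state the characterization up to ties and up to inclusion of zero-gap units, which leave \(f\) unchanged.

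I would also note the connection to the rest of the paper. Under the standing assumption \(\Delta_s^b\ge0\) on risky regimes, the oracle set is simply the top-\(\mathrm{maxO}\) units ranked by \(\Delta_s^b\). This is exactly the structure of \(\operatorname{Top}_{\mathrm{maxO}}\) in Algorithm~\ref{alg:budgeted_recovery}, with the observable score \(R_{b,t}\) standing in for the latent gap.
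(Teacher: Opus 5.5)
Your proposal is correct and follows the paper's proof. Both read the modular objective $\sum_{b\in\mathcal A}\Delta_s^b$ off the identity in Theorem~\ref{thm:op-descent-gain-main} and characterize the maximizer by the same swap argument; your extra steps (dropping negative gaps, and adding any omitted positive gap when $|\mathcal A|<\mathrm{maxO}$) supply the justification for ``largest \emph{positive} gaps'' that the paper's single exchange step leaves implicit.
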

\end{limbox}

\noindent Corollary~\ref{cor:oracle_budget} identifies the latent target of a budgeted controller: ranking units by their unobserved recovery gaps under the active budget. 
Since \(\Delta_s^b\) is not observable online, GNMR and \(\Delta\)-GNMR instantiate this oracle view with lightweight runtime scores for ranking risky units. 
The lock interval \(T_{\mathrm{lock}}\) further stabilizes active-set membership, yielding a budgeted online policy consistent with the descent-gain mechanism in Theorem~\ref{thm:op-descent-gain-main}.

\Needspace{9\baselineskip}
\begin{limbox}
\begin{theorem}[Historical-mean concentration under fixed GNMR thresholds]
\label{thm:GNMR-fixed}
For unit \(b\) and \(t\ge2\), let \(n_{b,t}=\|g_{b,t}\|_2\). For any fixed \(\alpha=1+\varepsilon>1\), under the sub-exponential tail
condition in Appendix~\ref{appendix: assumptions_and_proofs}, there is
\(c>0\) such that

\begin{equation}
\begin{aligned}
\Pr(\GNMR_{b,t}&\ge \alpha)\le\exp\left(-c\min\left\{\frac{\varepsilon^2}{\kappa_b^2},\frac{\varepsilon}{\kappa_b}\right\}\right)+2\exp\left(-c(t-1)\min\left\{\frac{(\varepsilon/\alpha)^2}{\kappa_b^2},\frac{\varepsilon/\alpha}{\kappa_b}\right\}\right).
\end{aligned}
\end{equation}
(See the proof in Appendix~\ref{sec:the_part3}.)
\end{theorem}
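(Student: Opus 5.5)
We will prove Theorem~\ref{thm:GNMR-fixed} with a union bound that separates two events. The first is that the current norm \(n_{b,t}\) is unusually large. The second is that the historical mean \(A_{b,t-1}=\frac1{t-1}\sum_{\tau<t}n_{b,\tau}\) is unusually small. We take the sub-exponential tail condition of Appendix~\ref{appendix: assumptions_and_proofs} in its natural normalized form. That is, there is a reference scale \(\mu_b>0\) (for instance a common mean \(\mathbb E n_{b,\tau}=\mu_b\), or a stationary level) such that the centered ratios \(X_\tau:=n_{b,\tau}/\mu_b-1\) are sub-exponential with parameter \(\kappa_b\). We also assume they are independent, or at least a martingale difference sequence, across steps. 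The constant \(c\) will be the universal constant from Bernstein's inequality. Because \(\epsilon\) in Algorithm~\ref{alg:get_gnmr} only decreases GNMR, we may drop it.

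\textbf{Decomposition.} For the splitting level we use \(\gamma:=1-\varepsilon/\alpha=1/\alpha\), since \(\alpha=1+\varepsilon\) gives \(1-\varepsilon/\alpha=1/\alpha\). On the event \(\{A_{b,t-1}\ge \gamma\mu_b\}\), the inequality \(\GNMR_{b,t}\ge\alpha\) forces \(n_{b,t}\ge \alpha\gamma\mu_b\). This threshold equals \(\mu_b\) only, which is too weak. So we instead use the sharper split
\[
\{\GNMR_{b,t}\ge\alpha\}\subseteq\{n_{b,t}\ge(1+\varepsilon)\mu_b'\}\cup\{A_{b,t-1}<\mu_b'\},
\]
with a suitable intermediate level \(\mu_b'\). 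The cleanest choice matching the stated exponents is as follows. Bound the first event with threshold \(\varepsilon\) on \(X_t\), which gives the first term \(\exp(-c\min\{\varepsilon^2/\kappa_b^2,\varepsilon/\kappa_b\})\). Bound the second event by the event that the empirical mean deviates from \(\mu_b\) by relative amount \(\varepsilon/\alpha\). We will adjust constants, absorbing any factor-of-two slack into \(c\). The reason the form works is this: if \(|A_{b,t-1}/\mu_b-1|<\varepsilon/\alpha\) and \(n_{b,t}<(1+\varepsilon)\mu_b(1-\varepsilon/\alpha)\cdot\alpha/(1)\), then the ratio stays below \(\alpha\). We will verify this inequality chain and rescale \(\varepsilon\) inside the first term if needed, by changing \(c\).

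\textbf{Tail bounds.} For the single-step term, the sub-exponential tail assumption directly gives
\[
\Pr(X_t\ge s)\le \exp\bigl(-c\min\{s^2/\kappa_b^2,s/\kappa_b\}\bigr).
\]
For the averaged term, we apply Bernstein's inequality for sums of independent, or martingale-difference, sub-exponential variables to \(\frac1{t-1}\sum_{\tau<t}X_\tau\) at deviation \(s=\varepsilon/\alpha\). The two-sided version gives the factor \(2\) and the exponent \((t-1)\min\{s^2/\kappa_b^2,s/\kappa_b\}\). Taking the union bound of the two events completes the proof.

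\textbf{Main obstacle.} The delicate step is making the deterministic inclusion consistent with the exact thresholds \(\varepsilon\) and \(\varepsilon/\alpha\) stated in the theorem. A naive split loses a constant factor in \(\varepsilon\). We plan to handle this by exploiting the freedom in the unspecified constant \(c\). For \(\varepsilon\) bounded, a constant-factor change in the deviation changes \(\min\{s^2,s\}\) by at most a constant factor. For large \(\varepsilon\) the linear regime absorbs it similarly. The second concern is whether the assumption supplies independence across steps. If it supplies only a conditional sub-exponential bound given the past, we will use the martingale Bernstein inequality (Freedman-type), which yields the same form.
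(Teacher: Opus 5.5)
Your route is the paper's: normalize by $\mu_b$, cover $\{\GNMR_{b,t}\ge\alpha\}$ by ``current norm large'' $\cup$ ``historical mean small'', bound the first event with the single-step sub-exponential tail and the second with two-sided Bernstein (hence the factor $2$), and absorb constants into $c$. The gap is the one deterministic step you defer, the inclusion, and as written it is false. With threshold $\varepsilon$ on $X_t$ and relative deviation $\varepsilon/\alpha$ on the mean, the complement of the union is $n_{b,t}<\alpha\mu_b$ and $A_{b,t-1}>(1-\varepsilon/\alpha)\mu_b=\mu_b/\alpha$. That only yields $\GNMR_{b,t}<\alpha^2$. For instance, $\alpha=1.1$, $n_{b,t}=1.05\mu_b$, $A_{b,t-1}=0.92\mu_b$ gives $\GNMR_{b,t}\approx1.14\ge\alpha$, yet neither event occurs. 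Your displayed ``reason the form works'' has the same defect: its bound on $n_{b,t}$ simplifies to $\alpha\mu_b$. This mean event is exactly the split at $\gamma=1/\alpha$ that you had just rejected. Your proposed repair, rescaling $\varepsilon$ only inside the first term, cannot succeed. If the mean is only known to exceed $\mu_b/\alpha$, keeping the ratio below $\alpha$ requires $n_{b,t}\le\mu_b$. That is a zero deviation threshold for $X_t$, with no decay at all.

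The missing idea is to split the slack between both events. Take $s_1=\varepsilon/2$ for $X_t$ and $s_2=\varepsilon/(2\alpha)$ for the mean. Then $\alpha(1-s_2)=\alpha-\varepsilon/2=1+s_1$, so $n_{b,t}<(1+s_1)\mu_b$ and $A_{b,t-1}>(1-s_2)\mu_b$ force $\GNMR_{b,t}<(1+s_1)/(1-s_2)=\alpha$. For every $s>0$, $\min\{(s/2)^2/\kappa_b^2,(s/2)/\kappa_b\}\ge\frac14\min\{s^2/\kappa_b^2,s/\kappa_b\}$. So replacing $c$ by $c/4$ in both exponents recovers exactly the stated bound in terms of $\varepsilon$ and $\varepsilon/\alpha$. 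This is precisely the paper's ratio-event decomposition lemma. With it in place, the rest of your plan goes through. Dropping the algorithm's $\epsilon$ is harmless, since it only lowers the score. The martingale (Freedman-type) variant of Bernstein is an optional generalization that the paper does not need, because it assumes independence across steps.
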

\end{limbox}

\begin{table*}[t!]
\caption{The validation perplexity ($\downarrow$) for the pre-training LLaMA-2 models for variance model size with DeepSeek-style mixed-precision strategy under different precision strategy. $\alpha_t$ and $\beta_t$ represent the threshold of GNMR and $\Delta$-GNMR, respectively.}
\vspace{-1.5mm}
\label{table:llama2_activation_quant}
\centering
\setlength{\tabcolsep}{6pt}
\renewcommand{\arraystretch}{1.1}
\begin{threeparttable}
{\small
\begin{tabular}{ccc|ccccccccc} 
\toprule
\multirow{2}{*}{\textbf{precision}} & \multirow{2}{*}{$\alpha_t$} & \multirow{2}{*}{$\beta_t$} & \multicolumn{8}{c}{Training tokens (B)} \\
\cline{4-12}
&&& 1.3 & 2.6 & 3.9 & 5.2 & 6.6 & 7.9 & 9.2 & 10.5 & 13.1 \\ \midrule
\cellcolor{cyan!30}\textbf{6-bit/8-bit}& \cellcolor{cyan!30}1.5/1.1$^\Diamond$&\cellcolor{cyan!30}1.5  & \cellcolor{cyan!30}28.52 &\cellcolor{cyan!30}21.17 &\cellcolor{cyan!30}19.07 &\cellcolor{cyan!30}17.88 &\cellcolor{cyan!30}17.06 &\cellcolor{cyan!30}16.43 &\cellcolor{cyan!30}15.93 &\cellcolor{cyan!30}15.57 &\cellcolor{cyan!30}15.20 \\ \midrule
\textbf{8-bit} & N.A. & N.A. & 29.54 &21.57 &19.25 &17.96 &17.11 &16.45 &15.94 &15.56 &15.18 \\\midrule
\cellcolor{orange!30}\textbf{8-bit/16-bit}  & \cellcolor{orange!30}1.5/1.1$^\Diamond$&\cellcolor{orange!30}N.A. & \cellcolor{orange!30}29.50 &\cellcolor{orange!30}21.48 &\cellcolor{orange!30}19.14& \cellcolor{orange!30}17.89 &\cellcolor{orange!30}17.04 &\cellcolor{orange!30}16.37 &\cellcolor{orange!30}15.86 &\cellcolor{orange!30}15.48 &\cellcolor{orange!30}15.10 \\ \midrule
\textbf{16-bit} & N.A. & N.A. & 29.10 &21.35 &19.08 &17.84 &16.98 &16.33 &15.82 &15.44 &15.06 \\
\bottomrule
\end{tabular}}
{\footnotesize
\begin{tablenotes}
    \item[$\Diamond$] We use the two-stage threshold in this training task, see Appendix \ref{section: Experimental set up_ds} for more details.
\end{tablenotes}}
\vspace{-2.5mm}
\end{threeparttable}
\end{table*}

\begin{table}[t]
\caption{Trigger sparsity in the activation-quantization stress bench: proportion of eligible linear blocks exceeding GNMR and \(\Delta\)-GNMR thresholds.}
\vspace{-1.5mm}
\label{table:activation quantilization_ratio}
\centering
\setlength{\tabcolsep}{8pt}
\begin{threeparttable}
{\small
\begin{tabular}{ccccc} 
\toprule
\textbf{Model size} & $\alpha_t$ & $\beta_t$ & GNMR & $\Delta$-GNMR\\ \midrule
\multirow{2}{*}{\textbf{60M}} & 2.0&1.5&0.070\%&0.081\%\\ 
 & 3.0&2.0&0.019\%&0.033\% \\ \midrule
\multirow{2}{*}{\textbf{350M}} & 2.0&1.5&0.141\%&0.067\%\\ 
 & 3.0&2.0&0.019\%&0.016\% \\ \bottomrule
\end{tabular}}
\end{threeparttable}
\end{table}

\begin{table}[t!]
\vspace{-2mm}
\caption{The per-step time ($\downarrow$) and the throughput ($\uparrow$) of the LLaMA-2 1.3B training task with different precision strategies on the same device.}
\vspace{-1.5mm}
\label{table:llama2_activation_quant_speed}
\centering
\setlength{\tabcolsep}{6pt}
\begin{threeparttable}
{\small
\begin{tabular}{ccc} 
\toprule
\textbf{precision} & per-step time (s) & throughput (token/s) \\ \midrule
\textbf{8-bit} & 1.266 & 103532 \\ \midrule
\cellcolor{orange!30}\textbf{8-bit/16-bit}  & \cellcolor{orange!30}\textbf{1.264} &\cellcolor{orange!30}\textbf{103696}  \\ \midrule
\textbf{16-bit} & 1.294 & 101292\\ \bottomrule
\end{tabular}}
\end{threeparttable}
\vspace{-2mm}
\end{table}

\begin{table}[t!]
\caption{The validation perplexity ($\downarrow$) for the pre-training LLaMA-2 130M model with different two-stage GNMR threshold, $\alpha_t^{\text{init}}$ represents the threshold in the initial $2.5\%$ steps and $\alpha_t^{\text{main}}$ represents the threshold in the other steps.}
\vspace{-1.5mm}
\label{table:llama2_threshold selection}
\centering
\setlength{\tabcolsep}{10pt}
\begin{threeparttable}
{\small
\begin{tabular}{cc|ccc} 
\toprule
\multirow{2}{*}{$\alpha_t^{\text{init}}$}& \multirow{2}{*}{$\alpha_t^{\text{main}}$} & \multicolumn{3}{c}{Training tokens (B)} \\
\cline{3-5}
&& 1.3 & 2.1 & 2.6 \\ \midrule
1.1 & 1.1 & 26.53 &23.95 &23.57 \\
1.5 & 1.1 & \textbf{26.49} &\textbf{23.92} &\textbf{23.54} \\
1.5 & 1.5 & 26.56 &23.98 &23.60 \\
\bottomrule
\end{tabular}}
\end{threeparttable}
\vspace{-2mm}
\end{table}

\begin{table}[t!]
\caption{The validation perplexity ($\downarrow$) for the pre-training LLaMA-2 130M model with different $\mathrm{maxO}$.}
\vspace{-1.5mm}
\label{table:llama2_threshold maxo}
\centering
\setlength{\tabcolsep}{13pt}
\begin{threeparttable}
{\small
\begin{tabular}{c|ccc} 
\toprule
\multirow{2}{*}{$\mathrm{maxO}$} & \multicolumn{3}{c}{Training tokens (B)} \\
\cline{2-4}
& 1.3 & 2.1 & 2.6 \\ \midrule
2 & 26.56 &23.97 &23.59 \\
4 & \textbf{26.49} &\textbf{23.92} &\textbf{23.54} \\
6 & 26.50 &23.94 &23.56 \\
8 & 26.56 &24.01 &23.63 \\
\bottomrule
\end{tabular}}
\end{threeparttable}
\end{table}

\noindent Theorem~\ref{thm:GNMR-fixed} is diagnostic result.
The first term captures current-step deviation, while the second term captures historical-mean uncertainty and shrinks with history length. 
Thus, fixed high thresholds can become conservative after the historical mean stabilizes, motivating scheduled relaxation of the threshold \(\alpha_t\) and \(\beta_t\).

\section{Experiments}
\label{section: experiments}

We evaluate GNMR as a runtime stability controller in three settings:
activation-quantization stress, DeepSeek-style mixed-precision pre-training,
and LLaMA-2 13B fine-tuning stress. Appendix~\ref{section: Experimental set up}
reports full configurations, matched-trigger comparisons, threshold and
\(\mathrm{maxO}\) ablations, an ADAPT-style static baseline
\citep{menon2018adapt,Kummer_2023}, and MoE/PanGu-style stress tests.

\subsection{Pre-training LLaMA-2 under Activation-Quantization}

\noindent\textbf{Experiment setup.}
We pre-train LLaMA-2~\citep{touvron2023llama} models on C4-en~\citep{raffel2020exploring}, following the settings in~\citep{zhao2024galore,kong2025cr}; model sizes, token budgets, learning rates, and schedules are in Appendix~\ref{section: Experimental set up_aq} and Table~\ref{tab:model-config_aq}. This benchmark keeps weights and optimizer states in BF16 and applies token-wise quantization~\citep{jiang2022back,ramasinghe2025protocol} only to saved activations of LLaMA attention and SwiGLU MLP projection operators. These projection operators define the recoverable set, while embeddings, normalization, and output heads remain outside the GNMR action space. GNMR+\(\Delta\)-GNMR routes each recoverable unit between the low-cost activation path and the higher-fidelity recovery path, isolating runtime recovery from changes to the optimizer, model, or data.

\noindent\textbf{Experiment results.}
Table~\ref{table:activation quantilization} exposes the failure mode targeted by GNMR: a static low-cost activation path can severely degrade or fail, whereas GNMR-controlled 4-bit/8-bit recovery remains close to the fixed 8-bit and BF16/16-bit references across model sizes. This shows that online recovery over the eligible projection set can preserve the higher-fidelity trajectory without changing the training recipe.
Table~\ref{table:activation quantilization_ratio} shows that the recovery signal is highly localized: threshold exceedance stays below \(0.2\%\) in the reported runs. Thus the quality gain is not obtained by keeping a broad static slice of the model in high precision; GNMR acts as a selective runtime controller that converts rare unit-level risks into bounded recovery actions. Appendix~\ref{app:baseline-comparison-130m} compares this online policy with an ADAPT-style static sensitivity baseline~\citep{menon2018adapt,Kummer_2023}, and Appendix~\ref{section: Experimental set up} reports matched-trigger comparisons under the same recovery actuator.

\begin{table*}[t]
\centering
\small
\caption{
LLaMA-2 13B fine-tuning stress and downstream evaluation.
Higher is better for GSM8K, MMLU, and HellaSwag; lower is better for WikiText-2 PPL.
}
\label{tab:13b_downstream}
\begin{tabular}{lcccc}
\toprule
Method & GSM8K EM & MMLU Acc. & HellaSwag Acc. & WikiText-2 PPL \\
\midrule
BF16 & \textbf{15.09} & \textbf{29.92} & 64.62 & \textbf{5.4503} \\
Fixed INT8 & 14.71 & 29.88 & 64.55 & 5.4519 \\
GNMR+\(\Delta\) INT8/BF16 & 15.01 & \textbf{29.92} & \textbf{64.67} & 5.4511 \\
\bottomrule
\end{tabular}
\end{table*}

\subsection{Pre-training LLaMA-2 with DeepSeek-style Mixed Precision}

\noindent\textbf{Experiment setup.}
We next evaluate GNMR on C4-en under a DeepSeek-V3-style precision hierarchy
\citep{liu2024deepseek}. Operators fixed to 16-bit or 32-bit by the recipe stay
unchanged; GNMR controls only the remaining low-precision-eligible subgraph.
Because operator-level recovery can create inconsistent execution inside a
block, we use each transformer block as the recoverable unit. The active set is
capped by \(\mathrm{maxO}=6\), so at most \(25\%\) of layers use the recovery
path at a step. We compare dynamic 6-bit/8-bit against fixed 8-bit, and dynamic
8-bit/16-bit against fixed 16-bit. Details are in
Appendix~\ref{section: Experimental set up_ds} and
Table~\ref{tab:model-config_ds}.

\noindent\textbf{Experiment results.}
Table~\ref{table:llama2_activation_quant} shows that GNMR stays close to the
corresponding higher-fidelity references throughout the DeepSeek-style recipe.
This is recipe-level integration evidence: GNMR controls temporary recovery decisions over its eligible subgraph.
Table~\ref{table:llama2_activation_quant_speed} characterizes the reported
1.3B training stack. Dynamic 8-bit/16-bit preserves the low-cost execution
profile under this implementation while following the higher-fidelity quality
trend. Figures~\ref{fig:gnmr_bf16} and~\ref{fig:gnmr_duibi} also give diagnostic
risk traces, showing that GNMR-controlled recovery dampens high-risk excursions
relative to the fixed low-cost path.

\noindent\textbf{Scaling, threshold, and budget behavior.}
Figure~\ref{fig:gnmr_llama_scaling} reports GNMR-controlled LLaMA-style
pre-training from 130M to 3B under the 6-bit/8-bit setting; the corresponding
configurations and token budgets are documented in
Appendix~\ref{section: Experimental set up_ds}. This result shows that GNMR preserves the expected model-and-token scaling trend in the tested regime.
Tables~\ref{table:llama2_threshold selection} and
\ref{table:llama2_threshold maxo} study threshold scheduling and the active
recovery budget. The two-stage threshold reflects noisy early GNMR history and
more stable later running statistics. The \(\mathrm{maxO}\) study characterizes a recovery--overhead trade-off: \(\mathrm{maxO}\) bounds active recovery actions and peak overhead.

\begin{figure}[t]
	\centering
	\begin{minipage}[c]{0.464\textwidth}
    \centering
    	\includegraphics[width=1\textwidth]{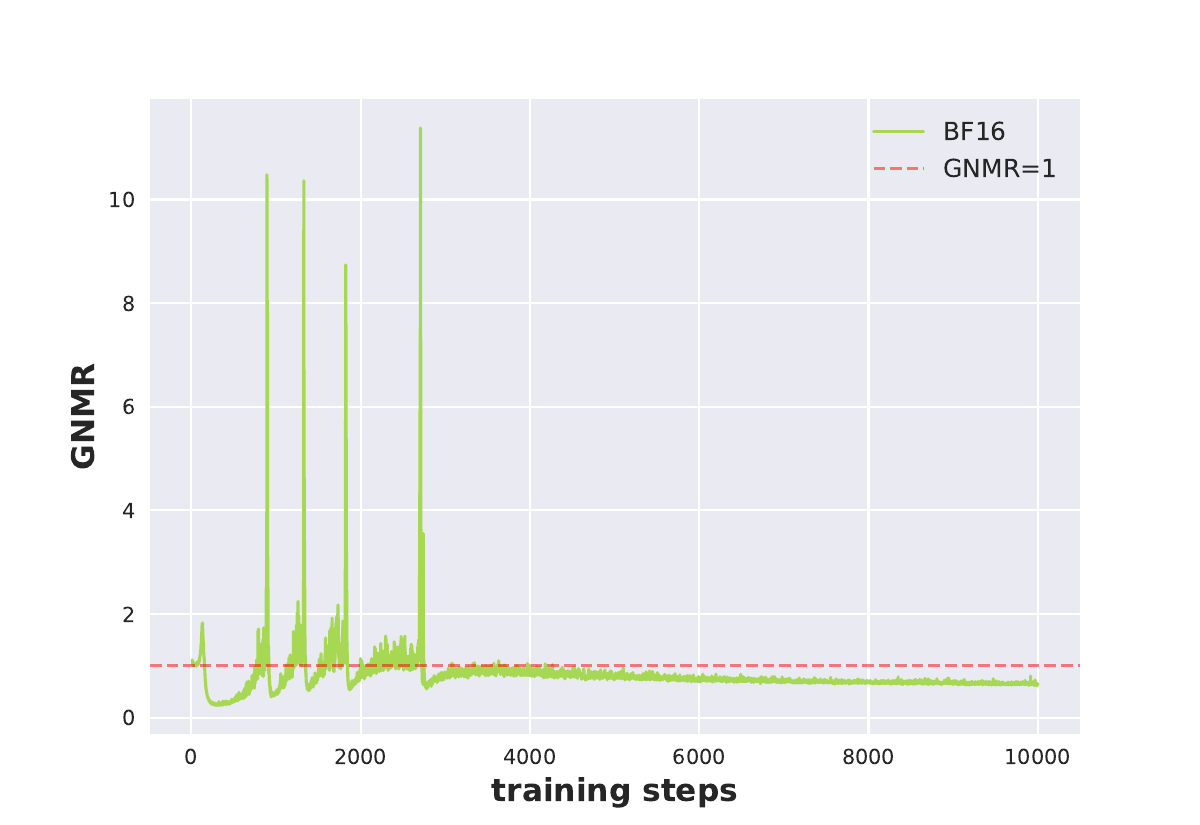}
    \caption{The GNMR curve during the pre-training task of LLaMA-2 model with BF16 precision.}
    \label{fig:gnmr_bf16}
	\end{minipage} 
    \hspace{2mm}
	\begin{minipage}[c]{0.464\textwidth}
    \centering
    	\includegraphics[width=1\textwidth]{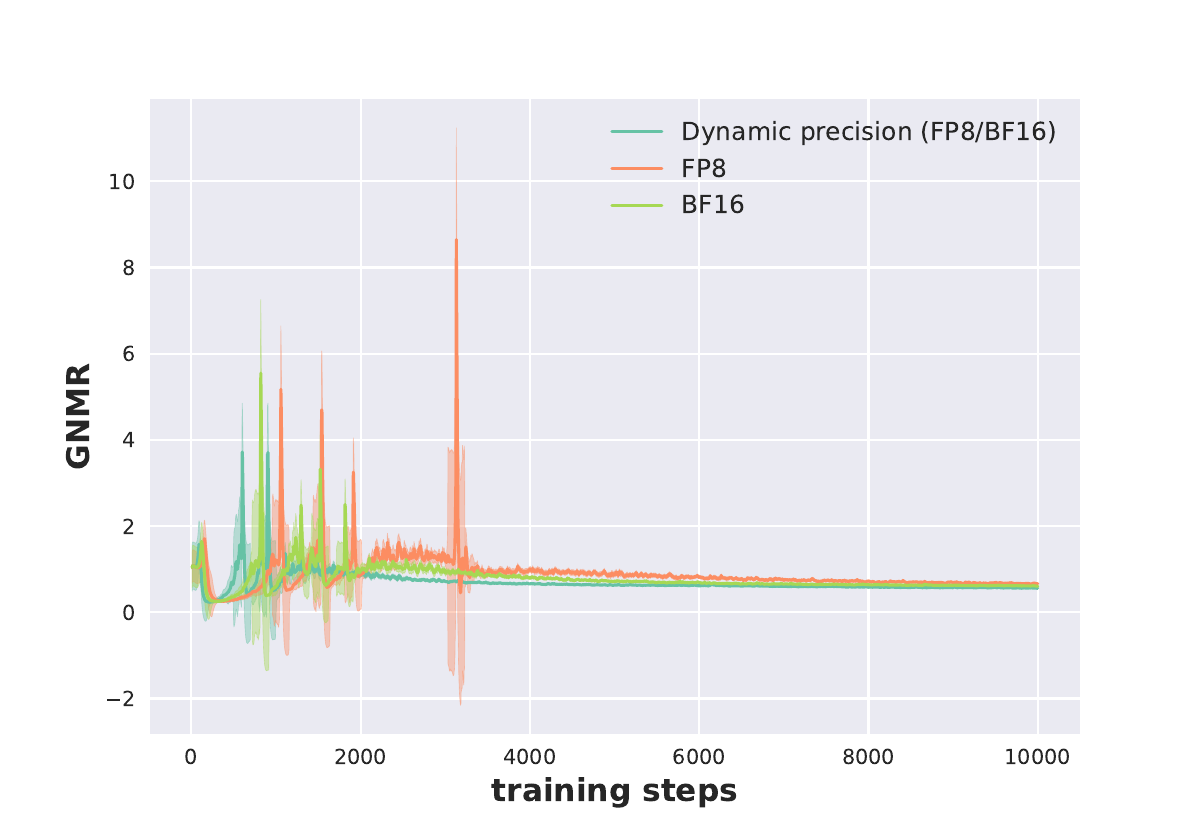}
    \caption{Evolution of average GNMR values across operators under different precision strategies.}
    \label{fig:gnmr_duibi}
	\end{minipage} 
\end{figure}

\begin{figure}[t]
\centering
\includegraphics[width=0.65\textwidth]{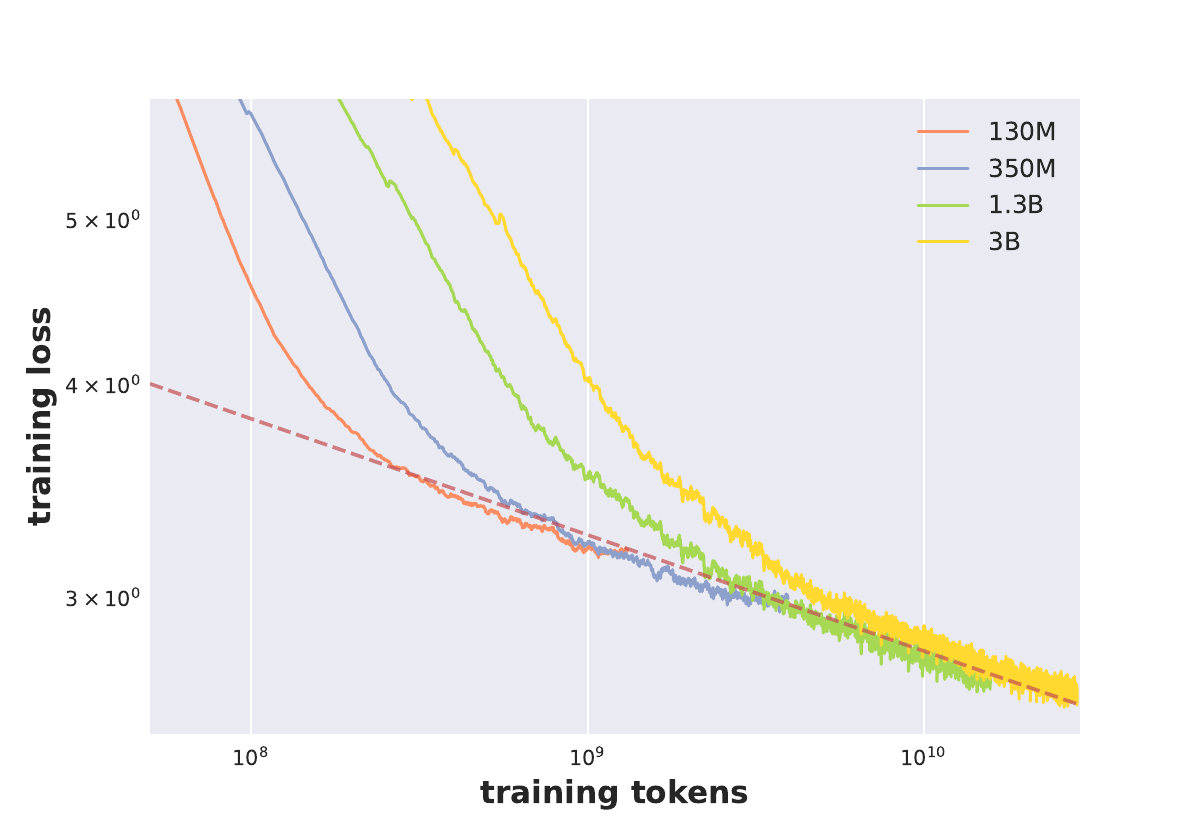}
\caption{Scaling trajectory for GNMR-controlled LLaMA-style pre-training under the 6/8-bit low-cost/recovery setting. The corresponding model configurations, token budgets, learning rates, and final evaluation PPLs are reported in Table~\ref{tab:model-config_ds}.}
\vspace{-1mm}
\label{fig:gnmr_llama_scaling}
\end{figure}

\subsection{LLaMA-2 13B Fine-tuning Stress Test}

\noindent\textbf{Experiment setup.}
Finally, we evaluate LLaMA-2 13B under BF16, fixed INT8, and dynamic INT8/BF16 controlled by GNMR+\(\Delta\)-GNMR. 
The controller acts only on the low-precision-eligible recovery path, with \(\mathrm{maxO}=10\) over 40 decoder layers. 
We fine-tune LoRA~\citep{hu2022lora} adapters while keeping the base model frozen, and evaluate matched downstream tasks covering math reasoning, multiple-choice knowledge, commonsense completion, and language modeling: GSM8K~\citep{cobbe2021training}, MMLU~\citep{hendrycks2020measuring}, HellaSwag~\citep{zellers2019hellaswag}, and WikiText-2~\citep{merity2016pointer}.

\noindent\textbf{Experiment results.}
Table~\ref{tab:13b_downstream} reports matched downstream metrics after the 13B fine-tuning stress test. 
GNMR+\(\Delta\) INT8/BF16 obtains high-fidelity-level downstream performance across all reported tasks, matching BF16 on MMLU and outperforming BF16 on HellaSwag. 
On GSM8K and WikiText-2, GNMR+\(\Delta\) remains close to the BF16 reference while operating through the low-cost INT8/BF16 recovery interface. 
These results support GNMR as a runtime controller that preserves downstream behavior under a low-cost INT8 path. 
Additional fine-tuning and evaluation details are provided in Appendix~\ref{section: Experimental set up_13b}.

\section{Conclusion}
This paper frames low-precision language model training stability as a runtime control problem and presents GNMR as a controller for this setting. GNMR and $\Delta$-GNMR estimate long-horizon and short-window operator-level risks, then allocate a limited recovery budget to the most risky recoverable units. Experiments across activation-quantization stress, TE-backed low-precision training, and large-model post-training stress show that GNMR improves stability while preserving low-cost execution.

\section{Limitations}
GNMR targets runtime stability control over backend-provided low-cost and recovery paths. Its effectiveness depends on backend-level observability, recoverable units, and compatible recovery implementations. Further engineering can reduce casting, routing, and recovery-switching overheads.

\newpage

{
\small

\bibliography{reference}
\bibliographystyle{icml2026}
}

\newpage
\appendix
\begin{center}
{\sffamily\bfseries\fontsize{15}{18}\selectfont
    Appendix\par}
\end{center}
\vspace{5mm}

\section{Related works}
\label{sec: appendix_related_works}

This section reviews work on low-precision training and adaptation, with emphasis on how prior methods relate to the runtime stability-control view studied in this paper.

\medskip\noindent\textbf{Mixed-precision training and numerical formats.}
Mixed-precision training reduces memory traffic and improves accelerator throughput by combining low-cost arithmetic with higher-fidelity accumulation or safeguards.
Early FP16 training used loss scaling and selective FP32 accumulation to preserve accuracy on modern accelerators~\cite{micikevicius2017mixed}, while BF16 improved dynamic-range robustness and became a common choice for large-scale language model training~\cite{wangbfloat16}.
More recent work has moved toward sub-16-bit training.
FP8 formats and recipes, including E4M3/E5M2 variants, have shown that reduced-precision training can converge under carefully designed scaling rules~\cite{micikevicius2022fp8,peng2023fp8}.
Transformer Engine (TE) further makes FP8 execution practical through delayed scaling and per-tensor format selection~\cite{perez2023traininginferencelargelanguage}.
COAT compresses optimizer states and activations to FP8~\cite{xi2024coat}, while recent FP4 and NVFP4 training studies push precision even lower with stronger scaling, outlier handling, and selective high-fidelity components~\cite{wang2025optimizing,castro2025quartet,abecassis2025pretraining}.
These works provide important backend execution paths; GNMR instead studies when an existing low-cost path should be temporarily recovered for stability.

\medskip\noindent\textbf{Quantization, outliers, and memory-efficient adaptation.}
Quantization methods for language models highlight that low precision is often limited by outliers and fragile layers.
LLM.int8 and GPT3.int8 identify outlier features as a key challenge for accurate low-bit execution~\cite{dettmers2022gpt3}, while SmoothQuant shifts activation difficulty into weights to enable W8A8 inference~\cite{xiao2023smoothquant}.
GPTQ, AWQ, and ZeroQuant further improve post-training quantization through layer-wise reconstruction, salient-weight protection, or distillation~\cite{frantar2022gptq,lin2024awq,yao2022zeroquant}.
For adaptation, QLoRA trains adapters over frozen 4-bit bases using NF4 quantization and paged optimizers~\cite{dettmers2023qlora}.
Quantization-aware training and efficient QAT variants further reduce the quality gap at low bit-widths~\cite{liu2023llmqatdatafreequantizationaware,bondarenko2024low,chen2024efficientqat,ashkboos2024efqat,zhao2023pytorch}.
Outlier suppression and smoothing methods also show that precision risk is often local rather than uniform~\cite{wei2023outliersuppressionaccuratequantization}.
These methods mainly reduce quantization error through calibration, smoothing, or training procedures; GNMR uses runtime training signals to decide where recovery is needed during training.

\medskip\noindent\textbf{Systems support for low-precision training.}
Large-scale language model training relies on systems that combine low-precision execution with distributed parallelism.
Megatron-LM scales transformer training through tensor and pipeline parallelism~\cite{shoeybi2019megatron}, while ZeRO and ZeRO++ reduce optimizer-state and communication overhead through sharding and communication-aware design~\cite{rajbhandari2020zero,wang2023zero++}.
Communication compression and quantization further reduce distributed training cost, including 4-bit communication for sharded training and protocol-level compression~\cite{jia2024sdp4bit4bitcommunicationquantization,ramasinghe2025protocol,kong2025clapping}.
Pipeline and offload systems also reduce memory and communication bottlenecks in large training runs~\cite{huang2019gpipe,wan2025pipeoffload}.
These systems are complementary to our work: they determine how low-cost execution is implemented and scaled, while GNMR determines when local runtime evidence justifies temporary recovery.

\medskip\noindent\textbf{Training stability signals and optimizer-level stabilization.}
Stability failures such as loss spikes and gradient spikes are a known challenge in large-scale pre-training~\cite{takase2023spike}.
Classical gradient clipping controls exploding gradients~\cite{pascanu2013difficulty}, and Adaptive Gradient Clipping normalizes clipping by parameter scale~\cite{brock2021high}.
Transformer stabilization methods such as Pre-LN, RMSNorm, and DeepNorm help regulate update magnitudes in deep networks~\cite{xiong2020layernormalizationtransformerarchitecture}.
Gradient-noise-scale analysis characterizes batch-size regimes~\cite{mccandlish2018empirical}, while GSNR-style signals use gradient statistics to guide large-batch training~\cite{jiang2023accelerating}.
Recent spike-aware optimizers reset or adjust updates when instability is detected~\cite{huang2025spam,huang2025stable}.
Reduced-precision training can further increase sensitivity to stability and hyperparameters~\cite{lee2024fp8}.
These works motivate the use of runtime stability signals, but they do not directly provide an operator-local controller for budgeted precision recovery.

\medskip\noindent\textbf{Dynamic and operator-wise precision selection.}
Dynamic precision methods adapt numerical precision over time, layer, or operator.
ADAPT-style approaches estimate sensitivity and assign precision based on profiling or program analysis~\cite{menon2018adapt,Kummer_2023}.
Learnable Dynamic Precision formulates temporal and spatial precision choices as learnable parameters to optimize the compute-accuracy tradeoff~\cite{yu2022ldplearnabledynamicprecision}.
Convergence-aware operator-wise mixed precision methods also explore precision assignment using convergence or sensitivity information~\cite{dai2025convergence}.
These methods are closely related to our setting, but they are typically offline, coarse-grained, or aimed at compute-accuracy assignment rather than runtime stability control.
GNMR differs by using online operator-wise gradient deviations to trigger temporary recovery under an explicit budget, aligning precision intervention with local training risk rather than a fixed schedule or static sensitivity profile.

\section{Additional details for the operator-level modeling.}
\label{appendix: operator-level modeling}

This section complements the operator-wise runtime monitoring setup in Section~\ref{sec:preliminaries}. 
The main text keeps only the notation needed to define local gradient statistics for GNMR. 
Here, we spell out the sibling-operator, aggregation, and backpropagation details for common Transformer components. 
The goal is not to introduce a new architectural abstraction, but to make explicit how attention, feed-forward modules, normalization, residual paths, and MoE layers map to the monitored units used by the controller. 
The discussion applies to both encoder-only and decoder-style models \cite{vaswani2017attention,brown2020language}.

\medskip\noindent\textbf{Nest--parallel operator details.}
A depth-$U$ Transformer stack corresponds to the composition of nested operators \(\Nest^{[1]},\ldots,\Nest^{[U]}\) acting across layers, as in Eq.~\eqref{eq: modeling_2}. 
Inside one block \(u\), sibling operators act on the same input:
\begin{align}
\label{eq: modeling_3}
z^{[u,k]} := f^{[u,k]}\!\left(y^{[u-1]};\,\theta^{[u,k]}\right),\quad k=1,\dots,K_u.
\end{align}
An aggregation operation then merges their outputs and may also use the skip input \(y^{[u-1]}\):
\begin{equation}
\label{eq: modeling_4}
\begin{aligned}
&\Nest^{[u]}=z^{[u,0]}\!\left(z^{[u,1]},\ldots,z^{[u,K_u]};\ \theta^{[u,0]},\,y^{[u-1]}\right),\quad\text{with}\quad\Xi^{[u]}=\left(\theta^{[u,k]}\right)_{k=0}^{K_u}.
\end{aligned}
\end{equation}
This pattern captures multi-head attention, feed-forward modules, residual connections, normalization, and MoE variants. 
It also provides a common index for the monitored units whose gradient norms are used by GNMR.

\medskip\noindent\emph{Multi-head self-attention (MHA).}
Let \(H\) be the layer input. The head-wise projections are parallel operators
\begin{align}
Q &= H W_Q^{[u]},\quad
K = H W_K^{[u]},\quad
V = H W_V^{[u]} .
\end{align}
The attention aggregator then maps \(\left(Q,K,V\right)\) and the residual input \(H\) to the layer output:
\begin{equation}
\begin{aligned}
\label{eq:attention}
z_{\mathrm{att}}^{[u]}\!\left(Q,K,V;\,W_O^{[u]},\,H\right)=\mathrm{softmax}\!\left(\frac{QK^\top}{\sqrt{d_k}}\right) V\, W_O^{[u]} + H .
\end{aligned}
\end{equation}
Stacking multiple heads fits by letting the sibling set contain per-head projections and attention kernels, with the aggregator concatenating head outputs and applying \(W_O^{[u]}\).

\medskip\noindent\emph{Feed-forward networks (FFN) and MoE.}
A standard two-layer FFN with residual can be represented with \(K_u=1\) as
\begin{align}
\label{eq:ffn}
z^{[u,0]}\!\left(z^{[u,1]};\,\theta^{[u,0]},\,y\right)
= y + W_2\,\varphi\!\left(W_1\,\mathrm{LN}(y)\right).
\end{align}
Mixture-of-experts (MoE) variants can be represented using \(K_u>1\) sibling experts \(f^{[u,k]}\) and an aggregator that gates or mixes expert outputs, e.g.,
\begin{equation}
\begin{aligned}
&z^{[u,0]}\!\left(z^{[u,1]},\ldots,z^{[u,K_u]};\,\theta^{[u,0]},\,y\right)= \sum_{k=1}^{K_u} \alpha_k(y)\,z^{[u,k]},
\end{aligned}
\end{equation}
where \(\sum_{k=1}^{K_u}\alpha_k(y)=1\). 
This form covers router-based and top-\(k\) mixing used in modern MoE systems \cite{du2022glam,fedus2022switch,dai2024deepseekmoe,hu2026synergistic}.
Variants with concatenation followed by projection are obtained by letting \(z^{[u,0]}\) first concatenate \(\left(z^{[u,1]},\ldots,z^{[u,K_u]}\right)\) and then apply a linear map.

\medskip\noindent\textbf{Backpropagation in the operator view.}
Denote the upstream gradient at layer \(u\) by \(\bar y^{[u]} := \partial L/\partial y^{[u]}\), and define \(\bar z^{[u,k]} := \partial L/\partial z^{[u,k]}\). 
The aggregator routes sensitivity to each sibling output via
\begin{align}
\bar z^{[u,k]}
= \left(\frac{\partial z^{[u,0]}}{\partial z^{[u,k]}}\right)^{\!\top}\!\bar y^{[u]},
\quad k=1,\ldots,K_u .
\end{align}
Parameter gradients follow directly from the local Jacobians:
\begin{equation}
\begin{aligned}
    \nabla_{\theta^{[u,k]}} L
= \left(\frac{\partial f^{[u,k]}}{\partial \theta^{[u,k]}}\right)^{\!\top}\!\bar z^{[u,k]},
\quad
\nabla_{\theta^{[u,0]}} L
= \left(\frac{\partial z^{[u,0]}}{\partial \theta^{[u,0]}}\right)^{\!\top}\!\bar y^{[u]} .
\end{aligned}
\end{equation}

The gradient w.r.t.\ the layer input \(y^{[u-1]}\) sums the sibling and aggregator contributions:
\begin{align}
\bar y^{[u-1]}
= \sum_{k=1}^{K_u}\left(\frac{\partial f^{[u,k]}}{\partial y}\right)^{\!\top}\!\bar z^{[u,k]}
\;+\;
\left(\frac{\partial z^{[u,0]}}{\partial y}\right)^{\!\top}\!\bar y^{[u]} .
\end{align}
These expressions are shape-consistent with Eq.~\eqref{eq: modeling_4} and specialize to standard Transformer derivatives: for attention, the aggregator derivatives backpropagate through the \(\mathrm{softmax}\) and output projection; for FFN/MoE, they reduce to linear maps and elementwise nonlinearity Jacobians.

\medskip\noindent\textbf{Implementation scope.}
Residual paths are naturally included by passing \(y^{[u-1]}\) to the aggregator, as in Eqs.~\eqref{eq:attention}--\eqref{eq:ffn}. 
Layer normalization can be modeled either as a sibling \(f^{[u,k]}\) or inside the aggregator, depending on implementation needs. 
Parameter sharing, masking, and causal structure can be represented as operators inside \(f^{[u,k]}\) or the aggregator.

\medskip
In summary, the nest--parallel operator view recovers standard Transformer components while providing a common indexing scheme for operator-wise gradient statistics. 
This makes it a convenient interface for GNMR-based runtime stability monitoring and budgeted recovery.

\section{Assumptions and detailed proofs}
\label{appendix: assumptions_and_proofs}

This appendix provides the standing assumptions and detailed proofs for
Theorems~\ref{thm:op-descent-gain-main} and~\ref{thm:GNMR-fixed}.
The first part proves the bound-level gain of budgeted selective recovery.
The second part proves the fixed-threshold GNMR concentration result.
Throughout, \(\mathcal B_{\mathrm{rec}}\) is a finite set of recoverable units,
\(\mathcal F_s\) denotes the history available before the update at step \(s\),
and the active recovery set \(\mathcal A_s\) is \(\mathcal F_s\)-measurable.
When newly selected units at step \(t\) are denoted by \(\mathcal S_t\), we use
the lock/update convention in Theorem~\ref{thm:op-descent-gain-main}, namely
\(\mathcal S_t\subseteq\mathcal A_{t+1}\).

\subsection{Assumptions}
\label{appendix: assumptions}

\begin{assumption}[Smoothness]
\label{ass:smooth-partial-NP}
The training objective \(\mathcal L(\theta)\) is \(L\)-smooth, i.e., for any
\(\theta,\theta'\),
\begin{equation}
\label{eq:smoothness-definition}
\begin{aligned}
\mathcal L(\theta')
&\le
\mathcal L(\theta)
+
\left\langle \nabla\mathcal L(\theta),\theta'-\theta\right\rangle+
\frac{L}{2}\|\theta'-\theta\|^2 .
\end{aligned}
\end{equation}
\end{assumption}

\begin{assumption}[SGD noise]
\label{ass:sgd_noise}
The ideal mini-batch stochastic gradient at step \(s\) can be written as
\begin{align}
\label{eq:sgd-noise-decomposition}
g_s &= \nabla\mathcal L(\theta_s)+\xi_s,
\end{align}
where
\begin{align}
\label{eq:sgd-noise-moments}
\mathbb E[\xi_s\mid\mathcal F_s] &=0,
\qquad
\mathbb E[\|\xi_s\|^2\mid\mathcal F_s]\le \sigma^2 .
\end{align}
\end{assumption}

\begin{assumption}[Perturbation envelope]
\label{ass:perturbation_envelope}
For each recoverable unit \(b\in\mathcal B_{\mathrm{rec}}\), and for each path
\(q\in\{\mathrm{low},\mathrm{rec}\}\), let \(\mathcal E_q^b(\rho_s^b)\) denote
an operator-wise upper-envelope penalty contributed by unit \(b\) to the
one-step smoothness upper bound at risk state \(\rho_s^b\).  For an active set
\(\mathcal A_s\), let \(q_b(\mathcal A_s)=\mathrm{rec}\) if
\(b\in\mathcal A_s\), and let \(q_b(\mathcal A_s)=\mathrm{low}\) otherwise.
If \(r_s(\mathcal A_s)\) is the path-dependent perturbation in the actual update
direction, then the conditional path-dependent remainder in the smoothness
bound satisfies
\begin{equation}
\label{eq:perturbation-envelope}
\begin{aligned}
&\mathbb E\!\left[
-\eta\left\langle \nabla\mathcal L(\theta_s),r_s(\mathcal A_s)\right\rangle
+L\eta^2\left\langle g_s,r_s(\mathcal A_s)\right\rangle+\frac{L\eta^2}{2}\|r_s(\mathcal A_s)\|^2
\middle|\mathcal F_s\right]
\\
\le&
\sum_{b\in\mathcal B_{\mathrm{rec}}}
\Big[
\mathbf 1\{b\in\mathcal A_s\}
\mathcal E_{\mathrm{rec}}^b(\rho_s^b)+
\mathbf 1\{b\notin\mathcal A_s\}
\mathcal E_{\mathrm{low}}^b(\rho_s^b)
\Big].
\end{aligned}
\end{equation}
We use the right-hand side of \eqref{eq:perturbation-envelope} to define the
path-dependent part of the surrogate one-step upper bound.  The recovery gap is
\begin{align}
\label{eq:gap-definition}
\Delta\mathcal E^b(\rho)
&=
\mathcal E_{\mathrm{low}}^b(\rho)
-
\mathcal E_{\mathrm{rec}}^b(\rho).
\end{align}
The gap in \eqref{eq:gap-definition} is assumed to satisfy the lower-envelope
condition in Assumption~\ref{ass:persist} on the risky regimes used in
Theorem~\ref{thm:op-descent-gain-main}.
\end{assumption}

\begin{remark}[One sufficient parameterization]
\label{rem:penalty_parameterization}
Assumption~\ref{ass:perturbation_envelope} does not require low-cost errors to
be independent or zero-mean.  One sufficient parameterization is as follows.
Let \(e_{b,s}^q\) be the effective perturbation induced by the precision
transformation of unit \(b\) under path \(q\), after it propagates through the
relevant forward, activation-storage, or backward computation.  Define
\begin{equation}
\label{eq:penalty-moment-definitions}
\begin{aligned}
\mu_q^b(\rho)
&=
\left\|
\mathbb E[e_{b,s}^{q}\mid\mathcal F_s,\rho_s^b=\rho]
\right\|_2,\quad
v_q^b(\rho)
=
\mathbb E\!\left[
\|e_{b,s}^{q}\|_2^2
\mid
\mathcal F_s,\rho_s^b=\rho
\right].
\end{aligned}
\end{equation}
If \(G_s\ge \|\nabla\mathcal L(\theta_s)\|\), and \(\Gamma_b\ge1\) absorbs cross
terms, forward/backward propagation, residual or normalization amplification,
and operator coupling, then one may take
\begin{align}
\label{eq:penalty-parameterization}
\mathcal E_q^b(\rho)
&=
\eta G_s\mu_q^b(\rho)
+
\frac{L\eta^2}{2}\Gamma_b v_q^b(\rho).
\end{align}
Indeed, by \eqref{eq:penalty-moment-definitions}, the linear term in the
smoothness remainder is bounded by
\begin{equation}
\label{eq:linear-envelope-bound}
\begin{aligned}
&\eta\|\nabla\mathcal L(\theta_s)\|
\left\|
\mathbb E[e_{b,s}^q\mid\mathcal F_s,\rho_s^b=\rho]
\right\|_2\le
\eta G_s\mu_q^b(\rho).
\end{aligned}
\end{equation}
The quadratic term and the mixed term
\(L\eta^2\langle g_s,r_s\rangle\) can be bounded by a shared path-independent
term plus operator-wise second-moment penalties using Cauchy--Schwarz and Young
inequalities; the constants and coupling losses are collected into
\(\Gamma_b\) in \eqref{eq:penalty-parameterization}.  Thus the envelope can
include bias, variance, activation quantization error, forward/backward
perturbations, and backend-specific scaling or casting effects.
\end{remark}

\begin{assumption}[Selected-risk persistence]
\label{ass:persist}
Let \(\mathcal S_t\) be the newly selected recovery set at step \(t\).  For each
\(b\in\mathcal B_{\mathrm{rec}}\), assume that there exist
\(\pi_b\in(0,1]\) and \(\tau_b\) such that, whenever
\(\Pr(b\in\mathcal S_t)>0\),
\begin{align}
\label{eq:selected-risk-persistence}
\Pr(\rho_{t+1}^b\ge \tau_b\mid b\in\mathcal S_t) &\ge \pi_b .
\end{align}
We also assume the following lower envelope for the perturbation-penalty gap in
\eqref{eq:gap-definition}:
\begin{equation}
\label{eq:gap-lower-envelope}
\begin{aligned}
\Delta\mathcal E^b(\rho)
&\ge
\Delta\mathcal E^b(\tau_b)\mathbf 1\{\rho\ge\tau_b\},\quad
\Delta\mathcal E^b(\tau_b)
\ge 0 .
\end{aligned}
\end{equation}
Equivalently, the recovery path is not worse than the low-cost path in the
regimes used by the lower-bound argument, and the gap is at least
\(\Delta\mathcal E^b(\tau_b)\) once the next-step risk exceeds \(\tau_b\).
\end{assumption}

This is a sufficient controller-efficacy condition rather than a universal distributional assumption; it can be estimated from training traces.

\begin{remark}[Budget convention]
\label{rem:maxO_budget}
In Theorem~\ref{thm:op-descent-gain-main}, \(\mathrm{maxO}\) is interpreted as
a budget on the active recovery set, i.e., \(|\mathcal A_s|\le\mathrm{maxO}\).
This matches the claim that \(\mathrm{maxO}\) controls instantaneous overhead.
If an implementation instead caps only newly admitted units, then the same proof
applies with \(|\mathcal S_t|\le\mathrm{maxO}\), but the runtime overhead should
be reported using the active set size \(|\mathcal A_s|\).
\end{remark}

\subsection{Proof of Theorem~\ref{thm:op-descent-gain-main}}
\label{sec:the_part2}

We prove the theorem through several lemmas.  The proof is stated at the level
of smoothness upper bounds.  Thus, the quantity
\begin{align}
\bar{\mathcal U}_s^{\mathrm{Low}}
-
\bar{\mathcal U}_s^{\mathrm{Rec}}(\mathcal A_s)
\end{align}
is a difference between two surrogate one-step upper bounds, not an exact
difference between actual losses.

\begin{lemma}[Smoothness upper bound with perturbation envelope]
\label{lem:smoothness_envelope}
For any active recovery set \(\mathcal A_s\subseteq\mathcal B_{\mathrm{rec}}\),
the surrogate one-step upper bound can be written as
\begin{equation}
\label{eq:smoothness-envelope-bound}
\begin{aligned}
\bar{\mathcal U}_s(\mathcal A_s)
&=
\mathcal U_{0,s}
+
\sum_{b\in\mathcal B_{\mathrm{rec}}}
\Big[
\mathbf 1\{b\in\mathcal A_s\}
\mathcal E_{\mathrm{rec}}^b(\rho_s^b)+
\mathbf 1\{b\notin\mathcal A_s\}
\mathcal E_{\mathrm{low}}^b(\rho_s^b)
\Big],
\end{aligned}
\end{equation}
where \(\mathcal U_{0,s}\) collects the ideal SGD terms and all terms shared by
the two paths.  Moreover,
\begin{align}
\label{eq:surrogate-upper-bound-property}
\mathbb E[\mathcal L(\theta_{s+1})\mid\mathcal F_s]
&\le
\bar{\mathcal U}_s(\mathcal A_s).
\end{align}
\end{lemma}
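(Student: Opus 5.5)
We prove Lemma~\ref{lem:smoothness_envelope} by the standard one-step descent expansion under Assumption~\ref{ass:smooth-partial-NP}. The only new ingredient is that the actual update direction is split into an ideal SGD part and a path-dependent perturbation. Concretely, we write the actual step as
\[
\theta_{s+1}=\theta_s-\eta\bigl(g_s+r_s(\mathcal A_s)\bigr),
\]
where \(g_s\) is the ideal mini-batch gradient of Assumption~\ref{ass:sgd_noise} and \(r_s(\mathcal A_s)\) collects all effects of running the units outside \(\mathcal A_s\) on the low-cost path and those inside on the recovery path.

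First, we apply \eqref{eq:smoothness-definition} with \(\theta=\theta_s\) and \(\theta'=\theta_{s+1}\). This gives
\[
\mathcal L(\theta_{s+1})\le\mathcal L(\theta_s)-\eta\langle\nabla\mathcal L(\theta_s),g_s+r_s\rangle+\tfrac{L\eta^2}{2}\|g_s+r_s\|^2 .
\]
We then expand \(\|g_s+r_s\|^2=\|g_s\|^2+2\langle g_s,r_s\rangle+\|r_s\|^2\) and sort the terms into two groups. The first group is the terms not involving \(r_s\), namely \(\mathcal L(\theta_s)-\eta\langle\nabla\mathcal L(\theta_s),g_s\rangle+\tfrac{L\eta^2}{2}\|g_s\|^2\). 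The second group is exactly the remainder appearing on the left of \eqref{eq:perturbation-envelope}.

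Second, we take conditional expectation given \(\mathcal F_s\). For the first group we use \(\mathbb E[\xi_s\mid\mathcal F_s]=0\) and \(\mathbb E[\|\xi_s\|^2\mid\mathcal F_s]\le\sigma^2\), which yields the bound
\[
\mathcal U_{0,s}:=\mathcal L(\theta_s)-\eta\bigl(1-\tfrac{L\eta}{2}\bigr)\|\nabla\mathcal L(\theta_s)\|^2+\tfrac{L\eta^2}{2}\sigma^2 .
\]
This quantity is \(\mathcal F_s\)-measurable and does not depend on \(\mathcal A_s\), which justifies the claim that it is shared by both paths. Because \(\mathcal A_s\) is \(\mathcal F_s\)-measurable, the indicator functions can be treated as constants under the conditional expectation. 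The envelope Assumption~\ref{ass:perturbation_envelope} then bounds the conditional expectation of the second group by the sum of \(\mathcal E^b_{\mathrm{rec}}\) over \(b\in\mathcal A_s\) and \(\mathcal E^b_{\mathrm{low}}\) over \(b\notin\mathcal A_s\).

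Third, we define \(\bar{\mathcal U}_s(\mathcal A_s)\) as the right-hand side of \eqref{eq:smoothness-envelope-bound}. Adding the two conditional bounds gives \eqref{eq:surrogate-upper-bound-property}.

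The main point requiring care is the bookkeeping: which terms count as shared and which are path-dependent. In particular, the mixed term \(L\eta^2\langle g_s,r_s\rangle\) has to be assigned to the remainder rather than to \(\mathcal U_{0,s}\). Assumption~\ref{ass:perturbation_envelope} is stated with exactly this three-term remainder, so once the cross term is placed correctly the argument closes. No bound on the correlation between \(\xi_s\) and \(r_s\) is needed beyond what the envelope already absorbs, as Remark~\ref{rem:penalty_parameterization} indicates.
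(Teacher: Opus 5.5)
Your proposal is correct and matches the paper's proof essentially step for step: you expand the smoothness inequality with $\widetilde g_s = g_s + r_s(\mathcal A_s)$, bound the $r_s$-free part by $\mathcal U_{0,s}$ using the SGD noise moments, and bound the three-term path-dependent remainder directly via the perturbation envelope. Your explicit note that $\mathcal A_s$ is $\mathcal F_s$-measurable, so the indicators pass through the conditional expectation, is a small clarification that the paper leaves implicit in how the assumption is stated.
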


\begin{proof}
Let \(f_s:=\nabla\mathcal L(\theta_s)\), and let \(\widetilde g_s\) denote the
actual update direction under a given path assignment.  By
Assumption~\ref{ass:smooth-partial-NP}, for the update
\(\theta_{s+1}=\theta_s-\eta\widetilde g_s\), we have
\begin{equation}
\label{eq:smoothness-step-bound}
\begin{aligned}
\mathcal L(\theta_{s+1})
&\le
\mathcal L(\theta_s)
-
\eta\langle f_s,\widetilde g_s\rangle+
\frac{L\eta^2}{2}\|\widetilde g_s\|^2 .
\end{aligned}
\end{equation}
Write the actual update direction as the ideal mini-batch stochastic gradient
plus a path-dependent perturbation:
\begin{align}
\label{eq:actual-direction-decomposition}
\widetilde g_s
&=
g_s+r_s(\mathcal A_s),
\qquad
 g_s=f_s+\xi_s .
\end{align}
Substituting \eqref{eq:actual-direction-decomposition} into
\eqref{eq:smoothness-step-bound} gives
\begin{equation}
\label{eq:smoothness-expanded}
\begin{aligned}
\mathcal L(\theta_{s+1})
&\le
\mathcal L(\theta_s)
-
\eta\langle f_s,g_s\rangle
+
\frac{L\eta^2}{2}\|g_s\|^2-
\eta\langle f_s,r_s(\mathcal A_s)\rangle
+
L\eta^2\langle g_s,r_s(\mathcal A_s)\rangle+
\frac{L\eta^2}{2}\|r_s(\mathcal A_s)\|^2 .
\end{aligned}
\end{equation}
Taking conditional expectation with respect to \(\mathcal F_s\), the ideal part
in \eqref{eq:smoothness-expanded} is
\begin{equation}
\label{eq:ideal-part-bound}
\begin{aligned}
&\mathbb E\!\left[
\mathcal L(\theta_s)
-
\eta\langle f_s,g_s\rangle
+
\frac{L\eta^2}{2}\|g_s\|^2
\middle|\mathcal F_s\right]
=
\mathcal L(\theta_s)
-
\eta\|f_s\|^2
+
\frac{L\eta^2}{2}
\mathbb E[\|f_s+\xi_s\|^2\mid\mathcal F_s]
\\
=&
\mathcal L(\theta_s)
-
\eta\|f_s\|^2
+
\frac{L\eta^2}{2}
\left(
\|f_s\|^2
+
\mathbb E[\|\xi_s\|^2\mid\mathcal F_s]
\right)\le\mathcal L(\theta_s)
-
\eta\left(1-\frac{L\eta}{2}\right)\|f_s\|^2
+
\frac{L\eta^2}{2}\sigma^2.
\end{aligned}
\end{equation}
The first equality in \eqref{eq:ideal-part-bound} uses
\eqref{eq:sgd-noise-decomposition}, and the last inequality uses
\eqref{eq:sgd-noise-moments}.  Then, define the shared ideal term by
\begin{equation}
\label{eq:u0-definition}
\begin{aligned}
\mathcal U_{0,s}
&:=
\mathcal L(\theta_s)
-
\eta\left(1-\frac{L\eta}{2}\right)\|f_s\|^2+
\frac{L\eta^2}{2}\sigma^2 .
\end{aligned}
\end{equation}
The remaining terms in \eqref{eq:smoothness-expanded} are exactly the
path-dependent perturbation terms.  By Assumption~\ref{ass:perturbation_envelope},
their conditional contribution is bounded by \eqref{eq:perturbation-envelope}.
Combining \eqref{eq:u0-definition} and \eqref{eq:perturbation-envelope}
yields \eqref{eq:smoothness-envelope-bound}, and hence
\eqref{eq:surrogate-upper-bound-property}.
\end{proof}

\begin{lemma}[Active-set bound-level gain]
\label{lem:active_set_gain}
Let \(\bar{\mathcal U}_s^{\mathrm{Low}}\) be the upper bound when every
recoverable unit uses the low-cost path, and let
\(\bar{\mathcal U}_s^{\mathrm{Rec}}(\mathcal A_s)\) be the upper bound under
active recovery set \(\mathcal A_s\).  Then it holds that:
\begin{equation}
\label{eq:bound_gain_identity}
\begin{aligned}
&\bar{\mathcal U}_s^{\mathrm{Low}}
-
\bar{\mathcal U}_s^{\mathrm{Rec}}(\mathcal A_s)=
\sum_{b\in\mathcal B_{\mathrm{rec}}}
\Delta\mathcal E^b(\rho_s^b)
\mathbf 1\{b\in\mathcal A_s\}.
\end{aligned}
\end{equation}
\end{lemma}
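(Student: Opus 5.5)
The identity should follow directly from the representation in Lemma~\ref{lem:smoothness_envelope}. I would instantiate \eqref{eq:smoothness-envelope-bound} twice, with the same shared term \(\mathcal U_{0,s}\) and the same risk states \(\rho_s^b\). Both are determined by \(\mathcal F_s\) and do not depend on the path assignment.

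\textbf{Step 1 (always-low-cost bound).} The always-low-cost configuration corresponds to \(\mathcal A_s=\emptyset\). Every indicator \(\mathbf 1\{b\notin\mathcal A_s\}\) then equals one, so
\[
\bar{\mathcal U}_s^{\mathrm{Low}}=\mathcal U_{0,s}+\sum_{b\in\mathcal B_{\mathrm{rec}}}\mathcal E_{\mathrm{low}}^b(\rho_s^b).
\]

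\textbf{Step 2 (bound under recovery).} The general set \(\mathcal A_s\) gives
\[
\bar{\mathcal U}_s^{\mathrm{Rec}}(\mathcal A_s)=\mathcal U_{0,s}+\sum_{b\in\mathcal B_{\mathrm{rec}}}\Big[\mathbf 1\{b\in\mathcal A_s\}\mathcal E_{\mathrm{rec}}^b(\rho_s^b)+\mathbf 1\{b\notin\mathcal A_s\}\mathcal E_{\mathrm{low}}^b(\rho_s^b)\Big].
\]

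\textbf{Step 3 (subtract).} Subtracting the two expressions cancels \(\mathcal U_{0,s}\). Writing \(1=\mathbf 1\{b\in\mathcal A_s\}+\mathbf 1\{b\notin\mathcal A_s\}\) inside the low-cost sum, the terms \(\mathbf 1\{b\notin\mathcal A_s\}\mathcal E_{\mathrm{low}}^b\) cancel unit by unit. What remains is \(\sum_b\mathbf 1\{b\in\mathcal A_s\}\big(\mathcal E_{\mathrm{low}}^b-\mathcal E_{\mathrm{rec}}^b\big)(\rho_s^b)\). By \eqref{eq:gap-definition} this is exactly \eqref{eq:bound_gain_identity}. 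Because \(\mathcal B_{\mathrm{rec}}\) is finite, rearranging the sums is harmless.

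\textbf{Main obstacle.} The only subtle point is ensuring that the two bounds share the same \(\mathcal U_{0,s}\) and the same \(\rho_s^b\). If the risk states depended on which path was taken at step \(s\), the cancellation would fail. I would argue this from the construction in Lemma~\ref{lem:smoothness_envelope}: \(\mathcal U_{0,s}\) in \eqref{eq:u0-definition} involves only \(\theta_s\), \(f_s\), \(\sigma^2\), \(\eta\) and \(L\). The risk state \(\rho_s^b\) is a function of the history \(\mathcal F_s\), consistent with the convention that \(\mathcal A_s\) is \(\mathcal F_s\)-measurable. The path choice therefore enters only through the indicators.
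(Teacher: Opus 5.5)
Your proposal is correct and follows the paper's proof. Both instantiate the envelope representation of Lemma~\ref{lem:smoothness_envelope} for the all-low-cost assignment and for \(\mathcal A_s\), subtract so that \(\mathcal U_{0,s}\) and the terms for \(b\notin\mathcal A_s\) cancel, and identify what remains with \(\Delta\mathcal E^b(\rho_s^b)\) via \eqref{eq:gap-definition}. Your remark that both bounds share the same \(\mathcal U_{0,s}\) and risk states \(\rho_s^b\) states explicitly a point the paper uses implicitly.
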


\begin{proof}
By Lemma~\ref{lem:smoothness_envelope}, the always-low-cost upper bound is
\begin{align}
\label{eq:always-low-upper-bound}
\bar{\mathcal U}_s^{\mathrm{Low}}
&=
\mathcal U_{0,s}
+
\sum_{b\in\mathcal B_{\mathrm{rec}}}
\mathcal E_{\mathrm{low}}^b(\rho_s^b).
\end{align}
The recovery-policy upper bound is
\begin{equation}
\label{eq:recovery-policy-upper-bound}
\begin{aligned}
\bar{\mathcal U}_s^{\mathrm{Rec}}(\mathcal A_s)
&=
\mathcal U_{0,s}
+
\sum_{b\in\mathcal B_{\mathrm{rec}}}
\Big[
\mathbf 1\{b\in\mathcal A_s\}
\mathcal E_{\mathrm{rec}}^b(\rho_s^b)+
\mathbf 1\{b\notin\mathcal A_s\}
\mathcal E_{\mathrm{low}}^b(\rho_s^b)
\Big].
\end{aligned}
\end{equation}
Subtracting \eqref{eq:recovery-policy-upper-bound} from
\eqref{eq:always-low-upper-bound}, the shared term \(\mathcal U_{0,s}\)
cancels.  For any \(b\notin\mathcal A_s\), both bounds contain
\(\mathcal E_{\mathrm{low}}^b(\rho_s^b)\), so its contribution to the difference
is zero.  For any \(b\in\mathcal A_s\), the difference is
\begin{align}
\label{eq:operator-gap-algebra}
\mathcal E_{\mathrm{low}}^b(\rho_s^b)
-
\mathcal E_{\mathrm{rec}}^b(\rho_s^b)
&=
\Delta\mathcal E^b(\rho_s^b),
\end{align}
where the last equality is \eqref{eq:gap-definition}.  Therefore
\eqref{eq:bound_gain_identity} follows.
\end{proof}

\begin{lemma}[Selected-risk persistence lower bound]
\label{lem:selected_persistence_bound}
Under Assumption~\ref{ass:persist}, and assuming
\(\mathcal S_t\subseteq\mathcal A_{t+1}\), for each
\(b\in\mathcal B_{\mathrm{rec}}\),
\begin{equation}
\label{eq:selected-persistence-bound}
\begin{aligned}
&\mathbb E\!\left[
\Delta\mathcal E^b(\rho_{t+1}^b)
\mathbf 1\{b\in\mathcal A_{t+1}\}
\right]\ge
\Pr(b\in\mathcal S_t)\pi_b\Delta\mathcal E^b(\tau_b).
\end{aligned}
\end{equation}
\end{lemma}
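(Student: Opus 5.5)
The plan is to reduce the expectation to a chain of pointwise inequalities between nonnegative random variables. After that, only the persistence probability in Assumption~\ref{ass:persist} is needed.

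The first step is to use the inclusion \(\mathcal S_t\subseteq\mathcal A_{t+1}\), which gives \(\mathbf 1\{b\in\mathcal A_{t+1}\}\ge \mathbf 1\{b\in\mathcal S_t\}\) pointwise. To multiply by \(\Delta\mathcal E^b(\rho_{t+1}^b)\), we need this factor to be nonnegative. That is not assumed for all \(\rho\), but the lower envelope \eqref{eq:gap-lower-envelope} bounds it below by \(\Delta\mathcal E^b(\tau_b)\mathbf 1\{\rho\ge\tau_b\}\ge 0\).

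The order of steps therefore needs care, and this is where I expect the only real subtlety. If \(\Delta\mathcal E^b(\rho_{t+1}^b)\) were negative off the risky set, then enlarging \(\mathcal S_t\) to \(\mathcal A_{t+1}\) could decrease the quantity. So I would first apply the lower envelope while keeping the indicator \(\mathbf 1\{b\in\mathcal A_{t+1}\}\), which is legitimate because indicators are nonnegative:
\[
\Delta\mathcal E^b(\rho_{t+1}^b)\mathbf 1\{b\in\mathcal A_{t+1}\}\ge \Delta\mathcal E^b(\tau_b)\mathbf 1\{\rho_{t+1}^b\ge\tau_b\}\mathbf 1\{b\in\mathcal A_{t+1}\}.
\]
The right-hand side is nonnegative because \(\Delta\mathcal E^b(\tau_b)\ge0\). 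Only now do I replace \(\mathcal A_{t+1}\) by the smaller set \(\mathcal S_t\), which is valid since this product is a nonnegative constant times an indicator.

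Taking expectations then gives
\[
\mathbb E\!\left[\Delta\mathcal E^b(\rho_{t+1}^b)\mathbf 1\{b\in\mathcal A_{t+1}\}\right]\ge \Delta\mathcal E^b(\tau_b)\Pr(\rho_{t+1}^b\ge\tau_b,\ b\in\mathcal S_t).
\]
The final step is to write the joint probability as \(\Pr(b\in\mathcal S_t)\Pr(\rho_{t+1}^b\ge\tau_b\mid b\in\mathcal S_t)\) and apply \eqref{eq:selected-risk-persistence} to bound the conditional probability below by \(\pi_b\).

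The case \(\Pr(b\in\mathcal S_t)=0\) is handled separately. There the right-hand side of \eqref{eq:selected-persistence-bound} is zero, and the left-hand side is nonnegative by the same envelope argument, so the bound holds trivially.

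For Theorem~\ref{thm:op-descent-gain-main}, I would then sum this lemma over \(b\in\mathcal B_{\mathrm{rec}}\) and combine it with Lemma~\ref{lem:active_set_gain} at \(s=t+1\). Linearity of expectation over the finite set \(\mathcal B_{\mathrm{rec}}\) gives the stated bound on \(\mathbb E[\Delta^{\mathcal U}_{t+1}]\).
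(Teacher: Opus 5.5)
Your proof is correct and follows the paper's argument: indicator domination from $\mathcal S_t\subseteq\mathcal A_{t+1}$, then the lower envelope, then the product rule with the persistence bound, with the case $\Pr(b\in\mathcal S_t)=0$ handled separately. The only difference is that you apply the envelope before shrinking $\mathcal A_{t+1}$ to $\mathcal S_t$, while the paper shrinks first. That reordering is harmless but buys nothing: the envelope, stated for all $\rho$ with $\Delta\mathcal E^b(\tau_b)\ge 0$, already gives $\Delta\mathcal E^b(\rho)\ge 0$ for every $\rho$, the paper uses this directly, and your own first step relies on exactly this nonnegativity for $\rho<\tau_b$.
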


\begin{proof}
Since \(\mathcal S_t\subseteq\mathcal A_{t+1}\), we have
\begin{align}
\label{eq:selected-indicator-domination}
\mathbf 1\{b\in\mathcal A_{t+1}\}
&\ge
\mathbf 1\{b\in\mathcal S_t\}.
\end{align}
By \eqref{eq:gap-lower-envelope},
\(\Delta\mathcal E^b(\rho_{t+1}^b)\ge0\), so
\eqref{eq:selected-indicator-domination} implies
\begin{equation}
\label{eq:selected-indicator-expectation-step}
\begin{aligned}
&\mathbb E\!\left[
\Delta\mathcal E^b(\rho_{t+1}^b)
\mathbf 1\{b\in\mathcal A_{t+1}\}
\right]\ge
\mathbb E\!\left[
\Delta\mathcal E^b(\rho_{t+1}^b)
\mathbf 1\{b\in\mathcal S_t\}
\right].
\end{aligned}
\end{equation}
Using \eqref{eq:gap-lower-envelope} again,
\begin{equation}
\label{eq:lower-envelope-expectation-step}
\begin{aligned}
&\mathbb E\!\left[
\Delta\mathcal E^b(\rho_{t+1}^b)
\mathbf 1\{b\in\mathcal S_t\}
\right]\ge
\Delta\mathcal E^b(\tau_b)
\Pr(\rho_{t+1}^b\ge\tau_b,\ b\in\mathcal S_t).
\end{aligned}
\end{equation}
If \(\Pr(b\in\mathcal S_t)=0\), \eqref{eq:selected-persistence-bound} is
immediate.  Otherwise, the product rule and \eqref{eq:selected-risk-persistence}
give
\begin{equation}
\label{eq:persistence-product-step}
\begin{aligned}
&\Pr(\rho_{t+1}^b\ge\tau_b,\ b\in\mathcal S_t)=
\Pr(b\in\mathcal S_t)
\Pr(\rho_{t+1}^b\ge\tau_b\mid b\in\mathcal S_t)\ge
\Pr(b\in\mathcal S_t)\pi_b .
\end{aligned}
\end{equation}
Combining \eqref{eq:selected-indicator-expectation-step},
\eqref{eq:lower-envelope-expectation-step}, and
\eqref{eq:persistence-product-step} gives \eqref{eq:selected-persistence-bound}.
\end{proof}

\begin{proof}[Proof of Theorem~\ref{thm:op-descent-gain-main}]
The identity in \eqref{eq:bound_gain_identity} follows directly from
Lemma~\ref{lem:active_set_gain}.  This is an equality because it is the
algebraic difference between two one-step smoothness upper bounds.

 For the second statement, apply Lemma~\ref{lem:selected_persistence_bound} to
each \(b\in\mathcal B_{\mathrm{rec}}\) and sum over \(b\).  Using
\eqref{eq:bound_gain_identity} at \(s=t+1\), we obtain
\begin{equation}
\label{eq:budgeted_persistence_gain}
\begin{aligned}
&\mathbb E\!\left[
\bar{\mathcal U}_{t+1}^{\mathrm{Low}}
-
\bar{\mathcal U}_{t+1}^{\mathrm{Rec}}(\mathcal A_{t+1})
\right]=
\sum_{b\in\mathcal B_{\mathrm{rec}}}
\mathbb E\!\left[
\Delta\mathcal E^b(\rho_{t+1}^b)
\mathbf 1\{b\in\mathcal A_{t+1}\}
\right]\ge
\sum_{b\in\mathcal B_{\mathrm{rec}}}
\Pr(b\in\mathcal S_t)\pi_b\Delta\mathcal E^b(\tau_b).
\end{aligned}
\end{equation}
This proves \eqref{eq:budgeted_persistence_gain}.
\end{proof}

\begin{lemma}[Variance-gap model as a special case]
\label{lem:variance_gap_special_case}
Suppose the path-induced perturbations are zero-mean, conditionally independent,
and only the second-moment term is retained in the smoothness upper bound.  If
\(\Gamma_b=1\), then
\begin{align}
\label{eq:variance-gap-envelope-special-case}
\mathcal E_q^b(\rho)
&=
\frac{L\eta^2}{2}v_q^b(\rho),
\end{align}
and hence
\begin{equation}
\label{eq:variance-gap-special-case}
\begin{aligned}
\Delta\mathcal E^b(\rho)
&=
\frac{L\eta^2}{2}
\left(
 v_{\mathrm{low}}^b(\rho)-
 v_{\mathrm{rec}}^b(\rho)
\right).
\end{aligned}
\end{equation}
\end{lemma}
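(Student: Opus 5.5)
The plan is to specialize the perturbation envelope of Assumption~\ref{ass:perturbation_envelope} through the sufficient parameterization in Remark~\ref{rem:penalty_parameterization}, namely \eqref{eq:penalty-parameterization}. Then we read off the gap directly from the definition \eqref{eq:gap-definition}. The argument has two stages: first show the linear (bias) term vanishes, then show the quadratic term reduces to a per-unit sum of second moments with coupling constant $\Gamma_b=1$.

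\textbf{Step 1: the linear term vanishes.} Under the zero-mean hypothesis, $\mathbb E[e^q_{b,s}\mid\mathcal F_s,\rho^b_s=\rho]=0$. By \eqref{eq:penalty-moment-definitions} this gives $\mu^b_q(\rho)=0$ for both paths $q\in\{\mathrm{low},\mathrm{rec}\}$. Hence the term $\eta G_s\mu^b_q(\rho)$ in \eqref{eq:penalty-parameterization} drops out. Equivalently, the term $-\eta\langle\nabla\mathcal L(\theta_s),r_s\rangle$ in \eqref{eq:perturbation-envelope} has zero conditional expectation, since $\nabla\mathcal L(\theta_s)$ is $\mathcal F_s$-measurable.

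\textbf{Step 2: the quadratic term splits by unit.} The hypothesis says only the second-moment term is retained, so the mixed term $L\eta^2\langle g_s,r_s\rangle$ is discarded. What remains is $\frac{L\eta^2}{2}\mathbb E[\|r_s\|^2\mid\mathcal F_s]$, with $r_s=\sum_b e^{q_b}_{b,s}$. Expanding the square, the cross terms $\mathbb E\langle e_b,e_{b'}\rangle$ for $b\ne b'$ vanish: by conditional independence and zero mean they factor into a product of zero conditional means. This leaves $\frac{L\eta^2}{2}\sum_b v^b_{q_b}(\rho^b_s)$, which is exactly the form \eqref{eq:penalty-parameterization} with $\Gamma_b=1$. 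It therefore gives \eqref{eq:variance-gap-envelope-special-case}.

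\textbf{Step 3: the gap.} Substituting \eqref{eq:variance-gap-envelope-special-case} into \eqref{eq:gap-definition} yields \eqref{eq:variance-gap-special-case}. The common factor $L\eta^2/2$ simply pulls out.

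The only delicate point is Step 2. We must justify that the cross terms vanish conditionally on each unit's own risk state. My first approach would be to condition on $\mathcal F_s$ together with the full vector $(\rho^{b}_s)_b$, using independence across units. Alternatively, the lemma can be read as a definitional special case of the parameterization, in which case the identity is immediate.
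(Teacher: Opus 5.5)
Your proposal is correct and follows the paper's proof: the paper also notes that zero mean forces $\mu_q^b(\rho)=0$, substitutes $\Gamma_b=1$ into \eqref{eq:penalty-parameterization} to obtain \eqref{eq:variance-gap-envelope-special-case}, and then subtracts using \eqref{eq:gap-definition}. Your Step 2 cross-term computation is an extra justification that the paper omits, because it takes $\Gamma_b=1$ and retaining only the second-moment term as hypotheses, which is exactly your ``definitional special case'' reading.
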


\begin{proof}
Under the stated zero-mean assumption, the bias strength satisfies $\mu_q^b(\rho)=0$.
If only the second-moment contribution is retained and \(\Gamma_b=1\), the
perturbation envelope in Remark~\ref{rem:penalty_parameterization} reduces to
\eqref{eq:variance-gap-envelope-special-case}.  Subtracting the recovery-path
penalty from the low-cost-path penalty and using \eqref{eq:gap-definition}
gives \eqref{eq:variance-gap-special-case}.  Thus, the previous variance-gap
analysis is recovered as a special case of the perturbation-envelope model.
\end{proof}

\subsection{Proof of Corollary~\ref{cor:oracle_budget}}

\begin{proof}
By Theorem~\ref{thm:op-descent-gain-main}, the bound-level gain for active set
\(\mathcal A_s\) is $\sum_{b\in\mathcal A_s}\Delta_s^b$.
Therefore, under budget \(|\mathcal A_s|\le K\), the oracle set solves
\begin{equation}
\label{eq:oracle-budget-optimization}
\begin{aligned}
\max_{\mathcal A\subseteq\mathcal B_{\mathrm{rec}}}
\quad
\sum_{b\in\mathcal A}\Delta_s^b
\qquad
\text{s.t.}
\quad
|\mathcal A|\le K .
\end{aligned}
\end{equation}
The objective in \eqref{eq:oracle-budget-optimization} is modular.  If an
optimal set contains \(i\) but excludes \(j\) with \(\Delta_s^j>\Delta_s^i\),
replacing \(i\) with \(j\) strictly increases the summation $\sum_{b\in\mathcal A_s}\Delta_s^b$,
contradicting optimality.  Thus, the oracle selects the largest positive gaps
under the budget.  When all selected gaps are nonnegative and the budget is
filled, this is the top-\(K\) set by \(\Delta_s^b\).
\end{proof}

\subsection{Proof of Theorem~\ref{thm:GNMR-fixed}}
\label{sec:the_part3}

We prove the historical-mean concentration bound for a fixed monitored unit
\(b\).  Let
\begin{align}
\label{eq:gnmr-x-mu-definitions}
X_t^b:=n_{b,t}=\|g_{b,t}\|_2,
\qquad
\mu_b:=\mathbb E[X_t^b]>0 .
\end{align}
Define the normalized variable
\begin{align}
\label{eq:gnmr-normalized-variable}
Y_t^b:=\frac{X_t^b}{\mu_b},
\qquad
\mathbb E[Y_t^b]=1 .
\end{align}
Let
\begin{align}
\label{eq:gnmr-historical-mean}
S_{t-1}^b
&=
\frac{1}{t-1}\sum_{\tau=1}^{t-1}Y_\tau^b
=
\frac{\bar n_{b,t-1}}{\mu_b}.
\end{align}
Then
\begin{align}
\label{eq:gnmr-ratio-representation}
\GNMR_{b,t}
&=
\frac{X_t^b}{\bar n_{b,t-1}}
=
\frac{Y_t^b}{S_{t-1}^b}.
\end{align}

\begin{assumption}[A sufficient sub-exponential model for historical-mean concentration]
\label{ass:psi1-operator}
For the monitored unit \(b\), the centered normalized variables
\(Z_t^b:=Y_t^b-1\) are mean-zero and independent across time over the historical
window considered.  They also satisfy
\begin{align}
\label{eq:psi1-bound}
\|Y_t^b-1\|_{\psi_1}&\le \kappa_b
\end{align}
for some \(\kappa_b>0\) independent of \(t\).  Consequently, there exists an
absolute constant \(c>0\) such that, for every \(x>0\),
\begin{equation}
\label{eq:subexp-tail-bound}
\begin{aligned}
\Pr(Y_t^b-1\ge x)
&\le
\exp\!\Bigg(
-c\min\left\{\frac{x^2}{\kappa_b^2},\frac{x}{\kappa_b}\right\}\Bigg).
\end{aligned}
\end{equation}
\end{assumption}

\begin{remark}[Scope of the sub-exponential assumption]
Assumption~\ref{ass:psi1-operator} is used only as a sufficient local condition for the Bernstein concentration step in Lemma~\ref{lem:mean-conc-operator}. It should not be interpreted as a global claim that operator-level gradient norms are independent or light-tailed throughout training. Sub-exponential tails provide a standard route to Bernstein-type bounds \citep{vershynin2018high}; however, deep-network gradients can be heavy-tailed or nonstationary in some regimes \citep{simsekli2019tail}. In such cases, the same ratio-decomposition argument can be retained by replacing the Bernstein concentration step with a clipped, robust, martingale, mixing, or empirical concentration bound.
\end{remark}

\begin{lemma}[Concentration of the historical mean]
\label{lem:mean-conc-operator}
Under Assumption~\ref{ass:psi1-operator}, for any \(\nu\in(0,1)\) and
\(t\ge2\),
\begin{equation}
\label{eq:historical-mean-concentration}
\begin{aligned}
&\Pr(S_{t-1}^b\le 1-\nu)\le
2\exp\Bigg(
-c(t-1)\min\left\{\frac{\nu^2}{\kappa_b^2},\frac{\nu}{\kappa_b}\right\}\Bigg).
\end{aligned}
\end{equation}
\end{lemma}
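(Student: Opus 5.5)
The event $\{S_{t-1}^b\le 1-\nu\}$ says that the empirical average of the centered variables $Z_\tau^b=Y_\tau^b-1$, $\tau=1,\dots,t-1$, is at most $-\nu$. We will therefore prove Lemma~\ref{lem:mean-conc-operator} as a direct consequence of Bernstein's inequality for sums of independent, mean-zero, sub-exponential variables \citep{vershynin2018high}.

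\textbf{Step 1: rewrite the event.} By \eqref{eq:gnmr-historical-mean},
\[
S_{t-1}^b-1=\frac{1}{t-1}\sum_{\tau=1}^{t-1}Z_\tau^b,
\]
so that
\[
\{S_{t-1}^b\le 1-\nu\}\subseteq\Big\{\Big|\tfrac{1}{t-1}\textstyle\sum_{\tau=1}^{t-1}Z_\tau^b\Big|\ge\nu\Big\}.
\]
The two-sided event is what produces the factor $2$ in \eqref{eq:historical-mean-concentration}. A one-sided Bernstein bound would also suffice and give a sharper constant.

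\textbf{Step 2: apply Bernstein's inequality.} Assumption~\ref{ass:psi1-operator} provides exactly the hypotheses needed. The $Z_\tau^b$ are independent and mean-zero, and $\|Z_\tau^b\|_{\psi_1}\le\kappa_b$ uniformly in $\tau$. The averaged form of Bernstein's inequality, with $N=t-1$ summands, then gives
\[
\Pr\Big(\Big|\tfrac1N\textstyle\sum_{\tau}Z_\tau^b\Big|\ge\nu\Big)\le 2\exp\Big(-cN\min\Big\{\tfrac{\nu^2}{K^2},\tfrac{\nu}{K}\Big\}\Big),
\]
where $K=\max_\tau\|Z_\tau^b\|_{\psi_1}$. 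Since the right-hand side increases in $K$, we may replace $K$ by the upper bound $\kappa_b$, which yields \eqref{eq:historical-mean-concentration}.

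\textbf{Main obstacle: the constant $c$.} The only subtle point is bookkeeping. The constant $c$ in Assumption~\ref{ass:psi1-operator}, coming from the single-variable tail \eqref{eq:subexp-tail-bound}, and the constant in Bernstein's inequality are different absolute constants. I would handle this by taking $c$ to be the minimum of the two; shrinking $c$ only weakens both bounds, so they remain valid. This common $c$ is then the one used later in Theorem~\ref{thm:GNMR-fixed}.

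The restriction $\nu\in(0,1)$ is not needed for the concentration step itself. It only ensures that $1-\nu>0$, so that the event is nontrivial and the bound can later be combined with the ratio representation \eqref{eq:gnmr-ratio-representation}.
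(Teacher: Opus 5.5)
Your proposal is correct and follows the paper's proof essentially step for step. The paper also writes $S_{t-1}^b-1$ as the average of the centered $Z_\tau$, applies the two-sided Bernstein inequality for independent sub-exponential variables with $N=t-1$, and drops the upper-tail event; your remarks on monotonicity in the $\psi_1$ bound and on reconciling the two absolute constants by taking their minimum are valid details the paper leaves implicit.
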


\begin{proof}
Let \(Z_\tau=Y_\tau^b-1\).  By Assumption~\ref{ass:psi1-operator},
\(\mathbb E[Z_\tau]=0\), \(\|Z_\tau\|_{\psi_1}\le\kappa_b\), and the sequence is
independent across \(\tau\).  By \eqref{eq:gnmr-historical-mean},
\begin{align}
\label{eq:historical-mean-centered-sum}
S_{t-1}^b-1
&=
\frac{1}{t-1}\sum_{\tau=1}^{t-1}Z_\tau .
\end{align}
The Bernstein inequality for sums of independent sub-exponential random
variables gives
\begin{equation}
\label{eq:bernstein-two-sided-mean}
\begin{aligned}
\Pr(|S_{t-1}^b-1|\ge \nu)\le2\exp\Bigg(
-c(t-1)\min\left\{\frac{\nu^2}{\kappa_b^2},\frac{\nu}{\kappa_b}\right\}\Bigg).
\end{aligned}
\end{equation}
The desired lower-tail bound in \eqref{eq:historical-mean-concentration}
follows from \eqref{eq:bernstein-two-sided-mean} by dropping the upper-tail
event.
\end{proof}

\begin{lemma}[Ratio event decomposition]
\label{lem:ratio_event_decomposition}
Let \(\alpha=1+\varepsilon>1\), and choose \(s_1=\varepsilon/2\) and
\(s_2=\varepsilon/(2\alpha)\).  Then
\begin{equation}
\label{eq:gnmr-ratio-event}
\begin{aligned}
\{\GNMR_{b,t}\ge\alpha\}
\subseteq
\{Y_t^b\ge 1+s_1\}\cup
\{S_{t-1}^b\le 1-s_2\}.
\end{aligned}
\end{equation}
\end{lemma}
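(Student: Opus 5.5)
We argue by contraposition: we show that if neither event on the right-hand side of \eqref{eq:gnmr-ratio-event} occurs, then \(\GNMR_{b,t}<\alpha\). The only ingredient is the ratio representation \eqref{eq:gnmr-ratio-representation}, \(\GNMR_{b,t}=Y_t^b/S_{t-1}^b\). It needs \(S_{t-1}^b>0\), which holds whenever the historical mean is positive. When the historical mean vanishes, Algorithm~\ref{alg:get_gnmr} sets \(\GNMR_{b,t}=1<\alpha\), so the event \(\{\GNMR_{b,t}\ge\alpha\}\) cannot occur and that case is trivial.

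Suppose \(Y_t^b<1+s_1=1+\varepsilon/2\) and \(S_{t-1}^b>1-s_2=1-\varepsilon/(2\alpha)\). Since \(\varepsilon/(2\alpha)<1/2\), the lower bound \(1-\varepsilon/(2\alpha)\) is positive, so dividing by \(S_{t-1}^b\) is legitimate and we obtain
\[
\GNMR_{b,t}=\frac{Y_t^b}{S_{t-1}^b}<\frac{1+\varepsilon/2}{1-\varepsilon/(2\alpha)}.
\]
(If \(Y_t^b\le 0\) the bound \(\GNMR_{b,t}<\alpha\) is immediate, so we may take \(Y_t^b>0\) here.)

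It remains to check that the right-hand side is at most \(\alpha\). Multiplying by the positive denominator, this is equivalent to
\[
1+\varepsilon/2\le \alpha\bigl(1-\varepsilon/(2\alpha)\bigr)=\alpha-\varepsilon/2=1+\varepsilon/2 .
\]
This holds with equality, so \(\GNMR_{b,t}<\alpha\), which is the contrapositive of \eqref{eq:gnmr-ratio-event}.

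The only delicate point is this final algebra: the choice \(s_2=\varepsilon/(2\alpha)\) is exactly what makes the two halves of the \(\varepsilon\) budget match. We should also state explicitly that the bound uses a strict inequality on \(Y_t^b\) together with a strict inequality on \(S_{t-1}^b\). That combination yields strict \(\GNMR_{b,t}<\alpha\), so no boundary case escapes the union.

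This inclusion, combined with \eqref{eq:subexp-tail-bound} at \(x=\varepsilon/2\) and Lemma~\ref{lem:mean-conc-operator} at \(\nu=\varepsilon/(2\alpha)\), then gives Theorem~\ref{thm:GNMR-fixed} after absorbing the factors \(1/2\) and \(1/4\) into \(c\).
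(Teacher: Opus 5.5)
Your proof is correct and follows the same approach as the paper: both argue by contraposition, bound $\GNMR_{b,t}=Y_t^b/S_{t-1}^b<(1+s_1)/(1-s_2)$, and use the identity $\alpha(1-s_2)=\alpha-\varepsilon/2=1+s_1$ to conclude $\GNMR_{b,t}<\alpha$. Your extra checks are details the paper leaves implicit: that $1-s_2>1/2>0$ (so the division is legitimate), and that the degenerate zero-mean case is trivial.
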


\begin{proof}
We prove the contrapositive.  Assume
\begin{align}
\label{eq:ratio-contrapositive-assumption}
Y_t^b<1+s_1,
\qquad
S_{t-1}^b>1-s_2 .
\end{align}
By \eqref{eq:gnmr-ratio-representation} and
\eqref{eq:ratio-contrapositive-assumption},
\begin{align}
\label{eq:ratio-contrapositive-bound}
\GNMR_{b,t}
&=
\frac{Y_t^b}{S_{t-1}^b}
<
\frac{1+s_1}{1-s_2}.
\end{align}
By the choice \(s_1=\varepsilon/2\) and \(s_2=\varepsilon/(2\alpha)\), we have
\begin{equation}
\label{eq:ratio-choice-identity}
\begin{aligned}
\alpha(1-s_2)
&=
\alpha-\frac{\varepsilon}{2}=
1+\frac{\varepsilon}{2}
=
1+s_1 .
\end{aligned}
\end{equation}
Thus \eqref{eq:ratio-choice-identity} implies $\frac{1+s_1}{1-s_2}=\alpha$, which gives \(\GNMR_{b,t}<\alpha\). So the event
\(\{\GNMR_{b,t}\ge\alpha\}\) cannot occur.  This proves
\eqref{eq:gnmr-ratio-event}.
\end{proof}

\begin{proof}[Proof of Theorem~\ref{thm:GNMR-fixed}]
Let \(\alpha=1+\varepsilon>1\), \(s_1=\varepsilon/2\), and
\(s_2=\varepsilon/(2\alpha)\).  By Lemma~\ref{lem:ratio_event_decomposition} and
the union bound,
\begin{equation}
\label{eq:gnmr-union-bound}
\begin{aligned}
\Pr(\GNMR_{b,t}\ge\alpha)
&\le
\Pr(Y_t^b\ge 1+s_1)+
\Pr(S_{t-1}^b\le 1-s_2).
\end{aligned}
\end{equation}
For the first term, \eqref{eq:subexp-tail-bound} gives
\begin{equation}
\label{eq:gnmr-numerator-tail-s1}
\begin{aligned}
\Pr(Y_t^b\ge 1+s_1)
&=
\Pr(Y_t^b-1\ge s_1)\le
\exp\!\left[
-c\min\left\{
\frac{s_1^2}{\kappa_b^2},
\frac{s_1}{\kappa_b}
\right\}
\right].
\end{aligned}
\end{equation}
Since \(s_1=\varepsilon/2\), constants can be absorbed into \(c\), yielding
\begin{equation}
\label{eq:gnmr-numerator-tail-epsilon}
\begin{aligned}
\Pr(Y_t^b\ge 1+s_1)\le
\exp\!\Bigg(
-c\min\left\{
\frac{\varepsilon^2}{\kappa_b^2},
\frac{\varepsilon}{\kappa_b}
\right\}
\Bigg).
\end{aligned}
\end{equation}

For the second term, Lemma~\ref{lem:mean-conc-operator} gives
\begin{equation}
\label{eq:gnmr-denominator-tail-s2}
\begin{aligned}
&\Pr(S_{t-1}^b\le 1-s_2)\le
2\exp\!\Bigg(
-c(t-1)\min\left\{
\frac{s_2^2}{\kappa_b^2},
\frac{s_2}{\kappa_b}
\right\}
\Bigg).
\end{aligned}
\end{equation}

Since \(s_2=\varepsilon/(2\alpha)\), constants independent of
\(\varepsilon,\alpha,\kappa_b,t\) can be absorbed into \(c\), but the factor
\(\varepsilon/\alpha\) must remain. Therefore,
\begin{equation}
\label{eq:gnmr-denominator-tail-epsilon-alpha}
\begin{aligned}
&\Pr(S_{t-1}^b\le 1-s_2)\le2\exp\!\Bigg(
-c(t-1)\min\left\{
\frac{(\varepsilon/\alpha)^2}{\kappa_b^2},
\frac{\varepsilon/\alpha}{\kappa_b}
\right\}
\Bigg).
\end{aligned}
\end{equation}

Combining \eqref{eq:gnmr-union-bound},
\eqref{eq:gnmr-numerator-tail-epsilon}, and
\eqref{eq:gnmr-denominator-tail-epsilon-alpha} proves
\begin{equation}
\label{eq:gnmr_fixed_main}
\begin{aligned}
&\Pr(\GNMR_{b,t}\ge\alpha)
\le
\exp\!\Bigg(
-c\min\left\{
\frac{\varepsilon^2}{\kappa_b^2},
\frac{\varepsilon}{\kappa_b}
\right\}
\Bigg)+2\exp\!\Bigg(
-c(t-1)\min\left\{
\frac{(\varepsilon/\alpha)^2}{\kappa_b^2},
\frac{\varepsilon/\alpha}{\kappa_b}
\right\}
\Bigg).
\end{aligned}
\end{equation}
\end{proof}

\section{Peak-memory modeling under the ${max}O$ budget}
\label{app:memory}

This section provides a simple upper-bound model for the additional peak \emph{activation} memory introduced when the controller promotes a subset of operators to higher precision under a hard $\mathrm{maxO}$ budget.

\noindent \textbf{Setup. }
Let $\mathcal{B}$ be the set of eligible operators.
For each operator $b\in\mathcal{B}$, let $A_b$ denote the number of activation elements that must be saved for backward under a fixed batch and sequence configuration.
At step $t$, let $\mathcal{S}_t\subseteq\mathcal{B}$ be the set of operators executed in high precision.
By design, the budget enforces $|\mathcal{S}_t|\le maxO$ for all $t$.

\medskip\noindent \textbf{From bitwidth to bytes. }
Let $p_{\text{low}}$ and $p_{\text{high}}$ denote the storage precisions in bits per element.
Memory is typically measured in bytes, and one byte equals eight bits.
Therefore, the bytes-per-element conversion is
\[
\text{bytes per element}=\frac{\text{bits per element}}{8}.
\]

\medskip\noindent \textbf{Additional peak activation memory. }
Relative to a baseline that runs all eligible operators in low precision, the additional activation memory at step $t$ is
\begin{equation}
\Delta M_t
=
\frac{p_{\text{high}}-p_{\text{low}}}{8}\sum_{b\in\mathcal{S}_t} A_b,
\label{eq:mem-delta}
\end{equation}
measured in bytes.
Using $|\mathcal{S}_t|\le maxO$, we obtain the upper bound
\begin{equation}
\begin{aligned}
\Delta M_t
\le&
\frac{p_{\text{high}}-p_{\text{low}}}{8}\sum_{b\in \mathrm{TopK}(\mathcal{B},maxO;A_b)} A_b\le
\frac{p_{\text{high}}-p_{\text{low}}}{8}\cdot maxO\cdot A_{\max},
\end{aligned}
\label{eq:mem-upper}
\end{equation}
where $A_{\max}:=\max_{b\in\mathcal{B}}A_b$.

\medskip\noindent \textbf{Interpretation.}
Eq.~\eqref{eq:mem-upper} shows that the additional peak activation memory grows linearly with the budget $\mathrm{maxO}$ and with the activation footprint of the promoted operators.
This explains why a hard $\mathrm{maxO}$ constraint is an effective and model-agnostic knob for controlling peak memory when dynamic precision promotion is triggered by rare but bursty instability events.

\section{Experimental setup and Additional Results}
\label{section: Experimental set up}
This appendix provides training configurations and additional controller-level analyses for the experiments in Section~\ref{section: experiments}. 

\noindent\textbf{Artifact use.}
We use publicly available research datasets and model checkpoints under their respective licenses or access terms. We do not redistribute the original datasets or model checkpoints; the experiments use them for research evaluation and training-stability analysis.
\subsection{Pre-training LLaMA-2 with activation quantization}
\label{section: Experimental set up_aq}
During pre-training across all LLaMA model scales, we implement the configuration framework from \cite{zhao2024galore}, with key technical specifications comprising a 256-token maximum sequence length and a global batch size of 512 samples, corresponding to approximately 131K tokens per optimization step. The learning rate scheduling integrates two-phase optimization: initial linear warm-up during the first 10\% of training iterations, succeeded by cosine decay gradually reducing the learning rate to 10\% of its initial magnitude. Complete architectural configurations and training protocol details have been systematically documented in Table \ref{tab:model-config_aq}.

\begin{table*}[t!]
\centering
\setlength{\tabcolsep}{5pt}
\caption{Model configurations for different LLaMA scales used in GNMR-guided activation quantization.}
\label{tab:model-config_aq}
\begin{tabular}{ccccccc}
\toprule
\textbf{Params} & \textbf{Hidden} & \textbf{Intermediate} & \textbf{Heads} & \textbf{Layers} & 
\begin{tabular}[c]{@{}c@{}}\textbf{Training}\\\textbf{Tokens}\end{tabular} & 
\begin{tabular}[c]{@{}c@{}}\textbf{Learning}\\\textbf{Rate}\end{tabular} \\
\midrule
60M  & 512  & 1376  & 8  & 8  & 1.3B  & 2.5E-3 \\
130M & 768  & 2048  & 12 & 12 & 2.6B  & 2.5E-3 \\
350M & 1024 & 2736  & 16 & 24 & 7.8B  & 1E-3   \\
1.3B   & 2048 & 5461  & 24 & 32 & 13.1B & 6E-4 \\
\bottomrule
\end{tabular}
\end{table*}

\begin{table*}[t!]
\centering
\setlength{\tabcolsep}{5pt}
\caption{Model configurations and final validation perplexity for different LLaMA scales used in GNMR-guided DeepSeek-style training.}
\label{tab:model-config_ds}
\begin{tabular}{ccccccc|c}
\toprule
\textbf{Params} & \textbf{Hidden} & \textbf{Intermediate} & \textbf{Heads} & \textbf{Layers} & 
\begin{tabular}[c]{@{}c@{}}\textbf{Training}\\\textbf{Tokens}\end{tabular} & 
\begin{tabular}[c]{@{}c@{}}\textbf{Learning}\\\textbf{Rate}\end{tabular}  & 
\begin{tabular}[c]{@{}c@{}}\textbf{Validation}\\\textbf{Perplexity}\end{tabular} \\
\midrule
130M & 768  & 2048  & 12 & 12 & 2.6B  & 1E-3 & 24.62 \\
350M & 1024 & 2736  & 16 & 24 & 7.8B  & 5E-4   & 18.87   \\
1.3B   & 2048 & 5472  & 24 & 32 & 13.1B & 5E-4 & 15.20 \\
3B   & 2560 & 6848  & 32 & 32 & 28.8B & 3E-4 & 13.93 \\
\bottomrule
\end{tabular}
\end{table*}

\subsection{Pre-training LLaMA-2 with DeepSeek-style Recipe-Level Recovery}
\label{section: Experimental set up_ds}

This setting evaluates GNMR as a controller layered on top of an existing DeepSeek-style precision recipe. We keep the same sequence length, batch size, and learning-rate scheduler as in Appendix~\ref{section: Experimental set up_aq}. Operators fixed to 16-bit or 32-bit precision by the recipe remain fixed; GNMR only controls the low-precision-eligible subgraph. The detailed model configurations are provided in Table~\ref{tab:model-config_ds}. We use a two-stage GNMR threshold, with a higher threshold during the first 2.5\% of training steps and a lower threshold afterward.
We instantiate two low-cost/recovery hierarchies:

\begin{itemize}
\item BF16 as the higher-fidelity recovery path with Transformer Engine hybrid FP8~\cite{perez2023traininginferencelargelanguage} as the low-cost path;
\item Hybrid FP8 as the higher-fidelity recovery path with quantization-simulated 4-bit (E2M1) and 6-bit (E3M2) linear operators as the low-cost path.
\end{itemize}

This design keeps the backend recipe explicit while testing whether GNMR can make bounded recovery decisions over the low-precision-eligible blocks.

\medskip\noindent\textbf{BF16 High-Precision with Hybrid FP8 Low-Precision Operators. }
In the first configuration, bfloat16 (BF16) serves as the high-precision format, establishing the baseline for parameter storage and computationally intensive operations. For low-precision computations, we implement a hybrid FP8 strategy to balance numerical stability with computational efficiency.

Specifically, the FP8-E4M3 format (4 exponent bits, 3 mantissa bits) is employed during forward propagation, while FP8-E5M2 (5 exponent bits, 2 mantissa bits) is utilized for backward propagation. This mixed-precision framework is implemented through Transformer Engine, which manages both GEMM operations and activation tensor storage in FP8 formats.

Within the FP8 autocast context, BF16 weights and inputs undergo dynamic scaling and conversion to FP8-E4M3 for forward computations. During backward propagation, incoming gradients are cast to FP8-E5M2 to prevent numerical underflow or overflow. All gradients resulting from FP8 GEMM operations are subsequently dequantized back to BF16 for parameter updates.

\medskip\noindent\textbf{Quantization-Simulated Low-Precision Operators. }
In the second configuration, hybrid FP8 operators constitute the high-precision baseline, while low-precision behavior is emulated through quantization using E2M1 (4-bit) and E3M2 (6-bit) formats. For each token, activation values are scaled into the representable range of the target precision, rounded to the nearest representable value, and rescaled to their original dynamic range.

During forward propagation, both weights ($w$) and activations ($x$) undergo quantization prior to computation to simulate precision constraints, with the resulting activations ($y$) being quantized again. Backward propagation follows a similar quantization procedure: incoming gradients ($\partial L/\partial y$) are quantized, and the computed weight and input gradients ($\partial L/\partial w$ and $\partial L/\partial x$) are quantized before further propagation.

\medskip\noindent\textbf{Recoverable scope. }
GNMR controls only the recoverable units exposed by the experimental backend. In the activation-quantization stress bench, the recoverable units are the saved activation paths of attention and SwiGLU MLP projection operators. Components that are fixed by the recipe or kept outside the activation-quantization stress path, such as embeddings, normalization layers, and output heads, are not controller actions. This scope separation keeps the experiment focused on runtime recovery decisions rather than full-model precision allocation.

\begin{table*}[t!]
\caption{The validation perplexity ($\downarrow$) for the pre-training LLaMA-2 models for variance model size with activation quantization under different precision strategy. $\alpha_t$ and $\beta_t$ represent the threshold of GNMR and $\Delta$-GNMR, respectively.}
\centering
\setlength{\tabcolsep}{10pt}
\renewcommand{\arraystretch}{1.1}
\label{table:llama2_activation_quant1}
\begin{threeparttable}
{\small
\begin{tabular}{ccccccc} 
\toprule
\multirow{2}{*}{\textbf{method}} & \multirow{2}{*}{$\alpha_t$} & \multirow{2}{*}{$\beta_t$} & 130M & 350M \\
\cline{4-5}
&&& 2.2B tokens& 6.4B tokens \\ \midrule
\textbf{ADAPT} & N.A. & N.A. & 215.10 & 88.97 \\\midrule
\multirow{3}{*}{$\GNMR$ + $\dGNMR$} & \cellcolor{cyan!30}1.5 & \cellcolor{cyan!30}1.3  & \cellcolor{cyan!30}\textbf{24.66} & \cellcolor{cyan!30}18.84  \\
 & \cellcolor{orange!30}2.0&\cellcolor{orange!30}1.5  & \cellcolor{orange!30}24.86 & \cellcolor{orange!30}19.07 \\
 & \cellcolor{green!30}3.0&\cellcolor{green!30}2.0  & \cellcolor{green!30}25.33 & \cellcolor{green!30}18.77   \\
\bottomrule
\end{tabular}}
\end{threeparttable}
\end{table*}

\subsection{LLaMA-2 13B Fine-tuning Stress Details}
\label{section: Experimental set up_13b}

We evaluate LLaMA-2 13B under three precision settings: BF16, fixed INT8, and dynamic INT8/BF16 controlled by GNMR+\(\Delta\)-GNMR. 
All runs fine-tune LoRA adapters on a frozen LLaMA-2 13B base model using two NVIDIA A100 80G GPUs. 
The common optimization setup uses AdamW, learning rate \(2\times10^{-5}\), weight decay 0.0, gradient clipping 1.0, LoRA rank 8, LoRA scaling \(\alpha_{\mathrm{LoRA}}=16\), and LoRA dropout 0.05. For GNMR+\(\Delta\)-GNMR, the controller acts on the low-precision-eligible recovery path with \(\mathrm{maxO}=10\), \(T_{\mathrm{lock}}=1\), a two-stage GNMR threshold \(\alpha_t=1.5\rightarrow1.1\), and a raw \(\Delta\)-GNMR threshold \(\beta_t=1.5\) with window size 5. 
We report matched downstream evaluations on GSM8K, MMLU, HellaSwag, and WikiText-2: GSM8K uses generation with answer extraction, MMLU uses multiple-choice conditional log-likelihood, HellaSwag uses continuation log-likelihood, and WikiText-2 uses autoregressive language-modeling perplexity. The main downstream results are reported in Table~\ref{tab:13b_downstream}.

\subsection{Comparison of Static Baselines}
\label{app:baseline-comparison-130m}

To compare online control with static sensitivity profiling, we run additional activation-quantization stress experiments on LLaMA-2 130M and 350M with simulated low-bit activation quantization under a standard BF16 training stack. All runs use the same data, optimizer, learning-rate schedule, and token budget as in our main activation-quantization experiments presented in Appendix~\ref{section: Experimental set up_aq}; only the recovery-selection rule changes. Activations of all linear operators in Transformer blocks are quantized using a simulated 4/8-bit floating-point format on top of a BF16 implementation, while weights and optimizer states remain in BF16.

\medskip\noindent\textbf{Static and stage-wise baselines.}
We implement a proxy ADAPT-style~\cite{menon2018adapt} baseline while sharing the same infrastructure as GNMR. Specifically, after a short warm-up phase (500 steps) in BF16, we enable a GNMR/GSS monitor that records per-block gradient-norm-to-history ratios. At the end of warm-up, we aggregate these statistics into a single scalar per block by averaging GNMR over the warm-up steps, sort blocks by this score, and select the top-$25\%$ ``high-risk'' blocks. For the remainder of training, these blocks are always executed with 8-bit activations, while all remaining blocks use 4-bit. The set of high-precision blocks is fixed and no further updates are made. This mimics ADAPT-style approaches that first estimate layer sensitivity and then assign a static bit-width to each layer.

\medskip\noindent\textbf{Experimental results.}
Table~\ref{table:llama2_activation_quant1} presents the validation perplexity during training across different precision strategies. The results show that a fixed sensitivity profile is insufficient under nonstationary runtime risk, while GNMR+\(\Delta\)-GNMR updates the recovery set online and preserves the low-cost trajectory more effectively.

\begin{table*}[t!]
\centering
\setlength{\tabcolsep}{5pt}
\caption{Matched-trigger and overhead comparison on the 130M activation-quantization stress bench.}
\label{table:matched_trigger_overhead_130m}
\begin{tabular}{lccccc}
\toprule
\textbf{Trigger} &\textbf{Final PPL} &
\textbf{Promotion Ratio} & \textbf{Hit-\(\mathrm{maxO}\)} &
\textbf{Step Time (s)} & \textbf{Peak Mem. (MB)} \\
\midrule
GNMR & \textbf{23.685} & \textbf{0.0040} & \textbf{0.0043} & \textbf{0.9972} & 3732 \\
GSNR & 23.729 & -- & -- & 1.2775 & 4009 \\
Jacobian proxy & 23.750 & -- & -- & 1.0167 & 3682 \\
\bottomrule
\end{tabular}
\end{table*}

\begin{table*}[t!]
\centering
\setlength{\tabcolsep}{2.5pt}
\caption{Model configuration for the GPT-based MoE stress test.
GNMR controls only low-precision-eligible MoE components exposed by the backend setting.}
\label{tab:model-config_gpt}
\begin{tabular}{cccccccc}
\toprule
\textbf{Params} & \textbf{Attn Hidden} & \textbf{Per Expert Hidden} & \textbf{MoE Heads} & \textbf{Attn Heads} & \textbf{Topk} & \textbf{Layers} & 
\begin{tabular}[c]{@{}c@{}}\textbf{Learning}\\\textbf{Rate}\end{tabular} \\
\midrule
3.7B  & 7168 & 2048 & 64  & 128  & 8 & 3 & 5E-5 \\

\bottomrule
\end{tabular}
\end{table*}

\begin{table*}[t!]
\caption{Training loss ($\downarrow$) for the GPT-based MoE stress test. GNMR-controlled 8-bit/16-bit recovery is compared with fixed 8-bit and BF16 references. \xmarkk means the run fails to converge.}
\label{table:gpt_training_loss}
\centering
\setlength{\tabcolsep}{4pt}
\renewcommand{\arraystretch}{1.2}
\begin{threeparttable}
\begin{tabular}{c|cccccccccccc} 
\toprule
\multirow{2}{*}{\textbf{Precision}} & \multicolumn{11}{c}{Training steps (K)} \\
\cline{2-13}  
& 1  & 2 & 3 & 4 & 6 & 8 & 10 & 12 & 14 & 17 & 20 & 29 \\ 
\midrule
\textbf{8-bit}              & 0.223 & 0.205 & 0.189 & 0.312 & 0.176 & 0.170 & 0.167 & 0.163 & 0.160 & 0.159 & 0.158 & \xmarkk \\
\cellcolor{orange!30}\textbf{8-bit/16-bit} & \cellcolor{orange!30}0.224 & \cellcolor{orange!30}0.201 & \cellcolor{orange!30}0.189 & \cellcolor{orange!30}0.183 & \cellcolor{orange!30}0.173 & \cellcolor{orange!30}0.170 & \cellcolor{orange!30}0.168 & \cellcolor{orange!30}0.164 & \cellcolor{orange!30}0.162 & \cellcolor{orange!30}0.159 & \cellcolor{orange!30}0.159 & \cellcolor{orange!30}0.154 \\  
\textbf{BF16}& 0.226 &  0.205 & 0.187 & 0.181 & 0.172 & 0.171 & 0.167 & 0.163 & 0.159 & 0.159 & 0.156 & 0.152 \\ 
\bottomrule
\end{tabular}
\end{threeparttable}
\end{table*}

\begin{table*}[t!]
\centering
\setlength{\tabcolsep}{5pt}
\caption{Model configurations for PanGu-1B used in GNMR-guided 4-bit training.}
\label{tab:model-config_pangu}
\begin{tabular}{cccccccc}
\toprule
\textbf{Params} & \textbf{Hidden} & \textbf{Intermediate} & \textbf{Heads} & \textbf{Layers} & 
\textbf{KV groups} & \textbf{Channels} &
\begin{tabular}[c]{@{}c@{}}\textbf{Learning}\\\textbf{Rate}\end{tabular} \\
\midrule
1.2B  & 1536  & 6144  & 12  & 26 & 6 & 128 & 2E-3 \\
\bottomrule
\end{tabular}
\end{table*}

\begin{table*}[t!]
\caption{Training loss ($\downarrow$) for the PanGu-1B-class 4-bit stress test. GNMR-controlled recovery improves the fixed 4-bit low-cost path, while BF16 remains the higher-fidelity reference.}
\label{table:pangu1b_training_loss}
\centering
\setlength{\tabcolsep}{4pt}
\renewcommand{\arraystretch}{1.2}
\begin{threeparttable}
\begin{tabular}{c|ccccccccccc} 
\toprule
\multirow{2}{*}{\textbf{Precision}} & \multicolumn{11}{c}{Training Tokens (B)} \\
\cline{2-12}  
& 0   & 0.4 & 0.8 & 1.2 & 1.6 & 2.0 & 2.4 & 2.8 & 3.2 & 3.6 & 4.0 \\ 
\midrule
\textbf{4-bit}              & 12.261 & 4.656 & 4.177 & 3.670 &3.527  & 3.348 & 3.314 & 3.238 & 3.170 & 3.156 & 3.124 \\
\cellcolor{orange!30}\textbf{4-bit+GNMR}  & \cellcolor{orange!30}12.261 & \cellcolor{orange!30}4.650 &\cellcolor{orange!30}3.929 &\cellcolor{orange!30}3.575 &\cellcolor{orange!30}3.564 &\cellcolor{orange!30}3.309 &\cellcolor{orange!30}3.290 &\cellcolor{orange!30}3.215 &\cellcolor{orange!30}3.155 &\cellcolor{orange!30}3.128 &\cellcolor{orange!30}3.076 \\  
\textbf{BF16}& 12.261   & 4.608 & 3.878 & 3.526 & 3.472 & 3.210 & 3.160 & 3.077 & 3.004 & 2.978 & 2.913 \\ 
\bottomrule
\end{tabular}
\end{threeparttable}
\end{table*}

\subsection{130M Matched-trigger and Overhead Characterization}
\label{app:matched-trigger-overhead-130m}

\noindent \textbf{Setup.}
We further isolate trigger quality from the recovery actuator on the 130M activation-quantization stress bench. 
This experiment follows the same 130M pre-training protocol as Appendix~\ref{section: Experimental set up_aq}: LLaMA-style 130M pre-training on C4-en~\citep{touvron2023llama,raffel2020exploring}, following the GaLore/CR-Net training setup~\citep{zhao2024galore,kong2025cr}, with maximum learning rate \(3\times10^{-3}\) and 2.6B training tokens. 
The low-cost path uses 4-bit activation quantization and the recovery path uses 8-bit activation quantization over the same recoverable projection-activation units as in the main 130M stress setting. 

All compared methods share the same model, data, optimizer, token budget, low-cost/recovery activation paths, and active recovery cap \(\mathrm{maxO}=4\). 
GNMR uses thresholded online triggering with \(T_{\mathrm{lock}}=1\), so \(\mathrm{maxO}=4\) is an upper bound rather than a target to fill. 
For GSNR and the Jacobian proxy, we use budgeted top-\(\mathrm{maxO}\) ranking baselines that select four units per step. 
Thus, this comparison tests whether GNMR can achieve competitive quality with a selective thresholded controller, rather than simply comparing full-budget rankers.

\medskip\noindent \textbf{Compared triggers.}
We compare GNMR with two alternative operator-level signals: a GSNR-style gradient stability signal~\citep{jiang2023accelerating} and a Jacobian-based proxy~\citep{takase2023spike}. 
We do not include GSS in this matched-trigger table because GSS is a global/full-process spike score rather than a per-unit online trigger under the same recovery interface. 
Step time and peak memory characterize overhead under the reported software stack.

\noindent \textbf{Results and interpretation.}
Table~\ref{table:matched_trigger_overhead_130m} shows that GNMR is an effective thresholded online trigger under the shared 4-bit/8-bit activation recovery interface. 
It obtains the best final PPL while activating the recovery path only sparsely: the promotion ratio is \(0.0040\), and the hit-\(\mathrm{maxO}\) rate is \(0.0043\). 

Under the same benchmark, GNMR also has the lowest measured step time and lower peak memory than GSNR. 
The Jacobian proxy has a comparable memory footprint, but gives worse final PPL under a denser full-budget ranking policy. 
Overall, this comparison supports GNMR as a selective and budget-efficient runtime trigger for the tested controller interface, rather than merely a correlated stability score.

\subsection{GPT-based MoE Stress Test}
\label{section: Experimental set up_gpt}
\noindent \textbf{Experiment setup.}
We include a GPT-based MoE stress test to evaluate whether the same runtime controller interface remains useful beyond dense LLaMA-style models. 
We pre-train a 3.7B-parameter GPT-based model~\citep{radford2019language,brown2020language} with a Mixture-of-Experts (MoE) architecture~\citep{shazeer2017outrageously,jacobs1991adaptive}. 
The model is trained with global batch size 32, sequence length 4096, learning rate \(5\times10^{-5}\), and cosine decay. 
In the MoE layers, GNMR controls whether low-precision-eligible components use the low-cost 8-bit path or the 16-bit recovery path. 
We compare this GNMR-controlled 8-bit/16-bit recovery policy with fixed 8-bit and BF16 references. 
We use \(\alpha_{\mathrm{GNMR}}=0.9\) after a high-fidelity warm-up: during the first 5\% of training steps, all low-precision-eligible MoE components use the 16-bit recovery path. 
This warm-up avoids reacting to unusually large early gradients before the GNMR running statistics stabilize; after it, GNMR controls the 8-bit/16-bit recovery decisions under the same threshold.
Complete architectural configurations and training details are provided in Table~\ref{tab:model-config_gpt}.

\medskip\noindent \textbf{Experiment results.}
Table~\ref{table:gpt_training_loss} reports training loss under the fixed 8-bit low-cost path, GNMR-controlled 8-bit/16-bit recovery, and BF16 reference. 
The fixed 8-bit run becomes unstable and fails to converge, while GNMR-controlled recovery remains close to the BF16 reference. 
This is a controller-level architecture stress test: fixed low-precision execution can become unstable in MoE training, while GNMR converts local runtime risk signals into bounded recovery actions. 
The result supports the portability of the runtime recovery interface to a non-dense transformer setting.


\subsection{PanGu-1B-class 4-bit Stress Test}
\label{section: Experimental set up_pangu}
We include a PanGu-1B-class stress test under 4-bit low-precision pressure. 
The model is trained with global batch size 256, sequence length 4096, learning rate \(2\times10^{-3}\), and cosine decay. 
We compare a fixed 4-bit low-cost path, a GNMR-controlled recovery setting, and a BF16 reference. 
The GNMR threshold is fixed to 0.4, and the model is trained for 4B tokens. 
The architectural configuration is provided in Table~\ref{tab:model-config_pangu}.

\medskip\noindent\textbf{Experiment results. }
Table~\ref{table:pangu1b_training_loss} reports training loss under the fixed 4-bit low-cost path, GNMR-controlled recovery, and BF16 reference. 
GNMR-controlled recovery improves convergence relative to fixed 4-bit training, while BF16 remains the higher-fidelity reference. 
This result supports the controller interface beyond the LLaMA-style setting: GNMR applies risk-triggered recovery whenever the backend exposes a low-cost path and a recovery path.

\end{document}